\theoremstyle{plain}
\newtheorem{theorem}{Theorem}[section]
\newtheorem{proposition}[theorem]{Proposition}
\newtheorem{lemma}[theorem]{Lemma}
\newtheorem{corollary}[theorem]{Corollary}
\theoremstyle{definition}
\newtheorem{definition}[theorem]{Definition}
\theoremstyle{remark}
\icmltitlerunning{Bivariate Causal Discovery using Bayesian Model Selection}
\Crefname{appendix}{app.}{appendices}
\theoremstyle{plain}
\definecolor{causalcolour}{rgb}{0.00, 0.0, 1.0}
\newcommand{\model}{\mathcal{M}}
\newcommand{\xcausey}{\model_{\textbf{\color{causalcolour}X} \!\rightarrow\! Y}}
\newcommand{\ycausex}{\model_{X \!\leftarrow\! \textbf{\color{causalcolour}Y}}}
\newcommand{\nomodelxcausey}{\textbf{\color{causalcolour}X} \!\rightarrow\! Y}
\newcommand{\nomodelycausex}{X\!\leftarrow\!\textbf{\color{causalcolour}Y}}
\newcommand{\bfx}{\mathbf{x}}
\newcommand{\bfy}{\mathbf{y}}
\newcommand{\indep}{\perp \!\!\! \perp}
\begin{document}

\twocolumn[
\icmltitle{Bivariate Causal Discovery using Bayesian Model Selection}

\begin{icmlauthorlist}
\icmlauthor{Anish Dhir}{yyy}
\icmlauthor{Samuel Power}{yyy1}
\icmlauthor{Mark van der Wilk}{yyy2}
\end{icmlauthorlist}

\icmlaffiliation{yyy}{Imperial College London}
\icmlaffiliation{yyy1}{University of Bristol}
\icmlaffiliation{yyy2}{University of Oxford}

\icmlcorrespondingauthor{Anish Dhir}{anish.dhir13@imperial.ac.uk}

\icmlkeywords{Machine Learning, ICML}

\vskip 0.3in
]

\printAffiliationsAndNotice{}  %

\begin{abstract}
Much of the causal discovery literature prioritises guaranteeing the identifiability of causal direction in statistical models. For structures within a Markov equivalence class, this requires strong assumptions which may not hold in real-world datasets, ultimately limiting the usability of these methods. Building on previous attempts, we show how to incorporate causal assumptions within the Bayesian framework. Identifying causal direction then becomes a Bayesian model selection problem. This enables us to construct models with realistic assumptions, and consequently allows for the differentiation between Markov equivalent causal structures. We analyse why Bayesian model selection works in situations where methods based on maximum likelihood fail. To demonstrate our approach, we construct a Bayesian non-parametric model that can flexibly model the joint distribution. We then outperform previous methods on a wide range of benchmark datasets with varying data generating assumptions.
\end{abstract}

\begin{figure*}[t]
\begin{subfigure}{0.33\textwidth}
\centering

\resizebox{\linewidth}{!}{

\tikzset{every picture/.style={line width=2.5pt}} %

\begin{tikzpicture}[x=0.75pt,y=0.75pt,yscale=-1,xscale=1]

\draw    (50,180) -- (272,179.61) -- (600,180) ;
\draw [color={rgb, 255:red, 208; green, 2; blue, 27 }  ,draw opacity=1 ][line width=2.25]    (50,110) .. controls (107,115) and (175,55) .. (290,100) .. controls (405,145) and (596.71,109.94) .. (600,110) ;
\draw [color={rgb, 255:red, 126; green, 211; blue, 33 }  ,draw opacity=1 ][line width=2.25]  [dash pattern={on 6.75pt off 4.5pt}]  (50,110) .. controls (107,115) and (175,55) .. (290,100) .. controls (405,145) and (596.71,109.94) .. (600,110) ;

\draw (591,183) node [anchor=north west][inner sep=0.75pt]  [font=\Huge] [align=left] {$\displaystyle \mathcal{D}$};
\draw (51,22) node [anchor=north west][inner sep=0.75pt]  [font=\huge,color={rgb, 255:red, 208; green, 2; blue, 27 }  ,opacity=1 ] [align=left] {$\displaystyle \max_{\theta ,\ \phi } \ p(\mathcal{D} |\theta ,\phi ,\mathcal{M}_{X\rightarrow Y})$};
\draw (351,22) node [anchor=north west][inner sep=0.75pt]  [font=\huge,color={rgb, 255:red, 126; green, 211; blue, 33 }  ,opacity=1 ] [align=left] {$\displaystyle \max_{\theta ,\ \phi } \ p(\mathcal{D} |\theta ,\phi ,\mathcal{M}_{Y\rightarrow X})$};

\end{tikzpicture}
}

\caption{}
\end{subfigure}
\begin{subfigure}{0.33\textwidth}
\centering

\resizebox{\linewidth}{!}{

\tikzset{every picture/.style={line width=2.5pt}} %

\begin{tikzpicture}[x=0.75pt,y=0.75pt,yscale=-1,xscale=1]

\draw    (50,180) -- (272,179.61) -- (600,180) ;
\draw [color={rgb, 255:red, 208; green, 2; blue, 27 }  ,draw opacity=1 ][line width=2.25]    (100,110) .. controls (134,112) and (136,79) .. (180,80) .. controls (224,81) and (201,104) .. (250,110) ;
\draw    (100,80) -- (100,180) ;
\draw    (250,80) -- (250,180) ;
\draw    (400,80) -- (400,180) ;
\draw    (550,80) -- (550,180) ;
\draw [color={rgb, 255:red, 126; green, 211; blue, 33 }  ,draw opacity=1 ][line width=2.25]    (400,110) .. controls (434,112) and (436,79) .. (480,80) .. controls (524,81) and (501,104) .. (550,110) ;

\draw (591,183) node [anchor=north west][inner sep=0.75pt]  [font=\Huge] [align=left] {$\displaystyle \mathcal{D}$};
\draw (51,22) node [anchor=north west][inner sep=0.75pt]  [font=\huge,color={rgb, 255:red, 208; green, 2; blue, 27 }  ,opacity=1 ] [align=left] {$\displaystyle \max_{\theta ,\ \phi } \ p(\mathcal{D} |\theta ,\phi ,\mathcal{M} '_{X\rightarrow Y})$};
\draw (351,22) node [anchor=north west][inner sep=0.75pt]  [font=\huge,color={rgb, 255:red, 126; green, 211; blue, 33 }  ,opacity=1 ] [align=left] {$\displaystyle \max_{\theta ,\ \phi } \ p(\mathcal{D} |\theta ,\phi ,\mathcal{M} '_{Y\rightarrow X})$};

\end{tikzpicture}
}

\caption{}
\end{subfigure}
\begin{subfigure}{0.33\textwidth}
\centering

\resizebox{\linewidth}{!}{

\tikzset{every picture/.style={line width=2.5pt}} %

\begin{tikzpicture}[x=0.75pt,y=0.75pt,yscale=-1,xscale=1]

\draw    (50,180) -- (272,179.61) -- (600,180) ;
\draw [color={rgb, 255:red, 208; green, 2; blue, 27 }  ,draw opacity=1 ][line width=2.25]    (50,180) .. controls (73,144) and (167,24) .. (220,70) .. controls (273,116) and (324,179) .. (400,180) ;
\draw [color={rgb, 255:red, 126; green, 211; blue, 33 }  ,draw opacity=1 ][line width=2.25]    (600,180) .. controls (579,149) and (482,38) .. (430,70) .. controls (378,102) and (317,183) .. (250,180) ;

\draw (591,183) node [anchor=north west][inner sep=0.75pt]  [font=\Huge] [align=left] {$\displaystyle \mathcal{D}$};
\draw (101,12) node [anchor=north west][inner sep=0.75pt]  [font=\huge,color={rgb, 255:red, 208; green, 2; blue, 27 }  ,opacity=1 ] [align=left] {$p(\displaystyle \mathcal{D} |\mathcal{M}_{X\ \rightarrow Y})$};
\draw (371,12) node [anchor=north west][inner sep=0.75pt]  [font=\huge,color={rgb, 255:red, 126; green, 211; blue, 33 }  ,opacity=1 ] [align=left] {$p(\displaystyle \mathcal{D} |\mathcal{M}_{Y\ \rightarrow X})$};

\end{tikzpicture}
}

\caption{}
\end{subfigure}

\caption{ Toy figure with datasets on the $x$ axis and values of densities on the $y$ axis
(a) With a sufficiently flexible model, maximising the likelihood for each dataset will give the same value for both causal models in \cref{sec:freq_causal_model}. (b) This has been solved by making restrictions on the datasets they can model. (c) Bayesian model selection retains the ability to identify causal direction, while allowing flexibility. This may lead to some probability of error (overlap).}
\label{fig:max_like_dataset_density}
\end{figure*}

\section{Introduction}
Many fields use statistical models to predict the response to actions (interventions), e.g.~health outcomes after treatment. Such predictions can not be made based on correlations gained from purely observational data, but require access to causal structure \cite{pearl2009causality}. In machine learning, causal structure also helps in many prediction tasks ranging from domain adaptation \cite{wang2022unified}, robustness \cite{buhlmann2020invariance}, and generalisation \cite{scherrer2022generalization}.
Conditional independencies in data can be used to infer causal structures, but only up to a Markov equivalence class (MEC) (\cref{app:background}) \cite{pearl2009causality}.
Further assumptions are required to completely recover a causal structure.
In this paper, we aim to identify causal direction within a Markov equivalence class. 
Hence, for simplicity, we focus solely on distinguishing $\nomodelxcausey$ and $\nomodelycausex$\footnote{For clarity, we colour the cause blue throughout.}.

In the bivariate case, much of the causal literature assumes restrictions on a model, which when they match reality, guarantee identifiabilility of the causal direction.
For example, for data generated by an additive noise model (ANM), an ANM model achieves a higher likelihood in the causal direction, as the anti-causal direction violates the ANM assumptions \cite{hoyer2008nonlinear, zhang2015estimation}.
A method's overall usefulness is then empirically verified on a wider array of data generating distributions, including datasets where the rigid restrictions of models such as ANMs hamper their performance, as the models are misspecified.
Moreover, in these settings, causal identifiability is no longer guaranteed. 
The goal of our work is to investigate whether causality can be identified in models without hard restrictions, reducing misspecification, even if we lose \emph{strict} guarantees of identifiability.

We replace hard restrictions on models with softer ones encoded by Bayesian priors. 
Each causal direction is treated as a separate Bayesian model, after which causal discovery can be formulated as Bayesian model selection.
The causal models encode the \textit{independent causal mechanisms} (ICM) assumption in their respective causal directions.
We show that if the anti-causal factorisation violates the ICM assumption, Bayesian model selection can discern the causal direction.
This is true even for model classes whose flexibility prevents likelihood based discrimination.
We then analyse the probability of error of the method, both when the model is correct and when it is misspecified.
Our work shows that when faced with datasets for which no restricting assumptions can reasonably be made, our framework allows for use of an appropriate model that can better enable identification of causal direction, relative to a more restricted model with inappropriate assumptions.

To demonstrate the usefulness of our approach, we use a Gaussian process latent variable  model (GPLVM) \cite{titsias2010bayesian}, which has the ability to model a wide range of densities. 
We test this on a range of benchmark datasets with various data generating assumptions.
We also compare against previously proposed methods, both those which rely on strict restrictions, and those which are more flexible, but lack formal identifiability guarantees.
Whereas most methods perform well on the types of datasets where their assumptions hold, we find that our method performs well across all datasets.
Our findings show that causal discovery is possible without losing the ability to model datasets well, a property that is desirable in real world cases.

\section{Preliminaries \& Assumptions}

In this paper, we focus on the problem of bivariate causal discovery, under the assumption that there are no hidden confounders. 
We first introduce notation.

Throughout, given a space $\mathcal{Z}$, we write $\mathcal{P}\left(\mathcal{Z}\right)$ for the space of probability measures over $\mathcal{Z}$. 
Similarly, given a pair of spaces $\mathcal{Z},\mathcal{Z}^{\prime}$, we write $\mathcal{K}\left(\mathcal{Z},\mathcal{Z}^{\prime}\right)$ for the space of Markov kernels from $\mathcal{Z}$ to $\mathcal{Z}^{\prime}$.
In all models considered, individual bivariate observations take the form $\left(x, y\right) \in \mathcal{X} \times \mathcal{Y}$, where $\mathcal{X},\mathcal{Y}$ are (potentially distinct) spaces. 
We will view $\left(x, y\right)$ as realisations of random variables, using capital letters (e.g. $P$) to denote the associated probability measure in $\mathcal{P}\left(\mathcal{X}\times\mathcal{Y}\right)$ (which we will refer to as a `data-generating process'), and lower-case
letters (e.g. $p$) to denote the associated probability density with respect to a suitable reference measure. 
Our `datasets' will then be modelled as $N\in\mathbf{N}$ iid observations from this probability measure, which we abbreviate as $\mathcal{D}^{N} = \left(\mathbf{x}^{N}, \mathbf{y}^{N}\right) = \left\{ \left(x_{i},y_{i}\right):i\in\left[N\right]\right\} $.
Our assumption is that our data-generating process admits a causal interpretation, whereby either i) $X$ causes $Y$ ($\nomodelxcausey$), or ii) $Y$ causes $X$ ($\nomodelycausex$). 
Our goal is to then determine which of these causal directions underlies the `true' data-generating process, based on our observed dataset.

\subsection{Data Generation through Structural Causal Models}
\label{sec:data_generation}

We can express causal relationships by a Structural Causal Model (SCM), which describes the hierarchical ordering of variable generation from
causes to effects \cite{pearl2009causality}.
Each SCM can be represented as a \textit{Directed Acyclic Graph} (DAG) with a vertex for each variable.
We denote the parents of a vertex $j$ in a DAG $\mathcal{G}$ as $\text{pa}_{\mathcal{G}}(j)$.
In our bivariate setting, the causal direction $\nomodelxcausey$ corresponds to a data-generating process of the form 
\begin{align}
    \quad X_{i}:=f_{X}\left(N_{i}^{X}\right),\quad Y_{i}:=f_{Y}\left(X_{i},N_{i}^{Y}\right)
    \label{eq:SCM_eq}
\end{align}
for $i = 1, \ldots, N$, where $f_{X}$ and $f_{Y}$ are deterministic `generation functions', and $\left\{ N_{i}^{X}:i\in\left[N\right]\right\} \overset{\mathrm{iid}}{\sim}\nu^{X}$, $\left\{ N_{i}^{Y}:i\in\left[N\right]\right\} \overset{\mathrm{iid}}{\sim}\nu^{Y}$ are iid realisations of some mutually independent `noise' random variables.
This naturally induces a factorisation $P_{X,Y} \left(\mathrm{d}\left(x,y\right)\right)=P_{X}\left(\mathrm{d}x\right)\cdot P_{Y\mid X}\left(\mathrm{d}y\mid x\right)$, which we term the `causal factorisation' of this SCM. 
By disintegration of measure, one can equally factorise $P_{X,Y}\left(\mathrm{d}\left(x,y\right)\right)=P_{Y}\left(\mathrm{d}y\right)\cdot P_{X\mid Y}\left(\mathrm{d}x\mid y\right)$, which we term the `anti-causal factorisation' of the same SCM.
The SCM for $\nomodelycausex$ is analogously defined with the causal factorisation being $P_{Y}\left(\mathrm{d}y\right)\cdot P_{X\mid Y}\left(\mathrm{d}x\mid y\right)$.

\subsection{Interventions and Independent Causal Mechanisms (ICM)}
\label{sec:icm}

An intervention on a variable is an action which alters its value, generation function, or noise input, while leaving those of all other
variables unchanged. 
In the causal factorisation, an intervention on a cause means that only the distribution of the cause is changed, while the conditional distribution of effect given cause remains unchanged.
For example, if our model assumes the causal direction $\nomodelxcausey$, then an intervention on $X$ can be achieved by changing either $f_{X}$ or $N_{i}^{X}$ in \cref{eq:SCM_eq}.
Such an action changes only changes one term in the causal factorisation $P_X(\calcd x)$, but will leave $P_{Y \mid X}(\calcd y \mid x)$ invariant. 
By contrast, the same action will often induce a change in both terms of the anti-causal factorisation, $P_{Y}\left(\mathrm{d}y\right)$ and 
$P_{X\mid Y}\left(\mathrm{d}x\mid y\right)$.
Given this invariance, we thus say that the causal factorisation satisfies the Independent Causal Mechanisms (ICM) assumption \citep[ch.~2]{peters2017elements}.
The ICM assumption implies a fundamental asymmetry for the impact of interventions on different factorisations of the same joint distribution.

\subsection{Causal Models}
\label{sec:freq_causal_model}

A causal model which encodes the ICM assumption should directly specify the causal factorisation.
Specifying terms of the causal factorisation ensures that parts of the model remain well specified under interventions.
Conversely, a model which specifies the anti-causal factorisation will often find both of its components to be mismatched after an intervention \cite{scholkopf2012causal}.
We define a causal model in line with the ICM assumption.

\begin{definition}
    A \textbf{causal model} is a tuple $\mathcal{M}_{\mathcal{G}} = (\mathcal{G}, \mathcal{C}, \mathcal{F})$, where $\mathcal{G}$ is a DAG with vertex set $\mathcal{V}$, and $\mathcal{C}$ is a set of conditional distributions that specifies the causal factorisation
    \begin{align}
        \mathcal{C} = \prod_{i \in \mathcal{V}} \mathcal{C}_{i \mid \text{pa}_{\mathcal{G}}(i)} \subset \prod_{i \in \mathcal{V}} \mathcal{K}(\mathcal{X}_{\text{pa}_{\mathcal{G}}(i)} \to \mathcal{X}_i).
    \end{align}
     Given a tuple of conditionals, one for each variable, $P = (P_i: i \in \mathcal{V}) \in \mathcal{C}$, define  $\delta_{\mathcal{C}}: \mathcal{C} \to \mathcal{P}(\mathcal{X}\times \mathcal{Y})$ as the map that assembles $P$ into the corresponding joint
    \begin{align}
        \delta_{\mathcal{C}}(P)(\calcd x_{\mathcal{V}}) = \prod_{i \in \mathcal{V}} P_i(\calcd x_i \mid x_{\text{pa}_{\mathcal{G}}(i)}).
    \end{align}
    Finally, define $\mathcal{F}$ as the set of induced joint distributions $\mathcal{F} = \{ \delta_{\mathcal{C}}(P) : P \in \mathcal{C} \}$.
\end{definition}

For example, $\xcausey$ specifies $\mathcal{C} = \mathcal{C}_{X} \times \mathcal{C}_{Y|X}$, with an induced joint $\delta_{\mathcal{C}}(P_X, P_{Y|X})(\calcd x, \calcd y) = P_X(\calcd x) P_{Y|X}(\calcd y)$ for $P_X \in \mathcal{C}_{X}, P_{Y|X} \in \mathcal{C}_{Y|X}$. 
Note that the mapping $\delta_{\mathcal{C}}$ is injective. 
Throughout, we work directly with distributions for generality.
In practice, the elements of $\mathcal{C}$ are constructed as a parameterised family (though the parameter may in principle be infinite-dimensional).
When relevant, we denote the parameterisation by parameter spaces $\Phi$ and $\Theta$ for $X$ and $Y$ respectively, as in \cref{fig:ICM_graphical_model}.
To emphasise the direction of the causal model, we will often append $\xcausey$ and $\ycausex$ to distributions and densities of interest.

We are interested in the case where the causal structure is \textit{not} known in advance and we seek to determine it from data. 
In the case that both $\mathcal{F}_{\nomodelxcausey}$ and $\mathcal{F}_{\nomodelycausex}$ are sufficiently expressive such that there are few restrictions on which joint distributions on $\mathcal{X}\times\mathcal{Y}$ can be learned (in principle), then model selection based on maximum likelihood will fail to identify causal direction, as it assigns equal scores to the `best' model compatible with each causal direction \cite{zhang2015estimation}. 
The notion of distribution-equivalence \cite{geiger2002parameter} helps to capture this principle.

\begin{definition}
Two causal models $\xcausey = \left(\nomodelxcausey, \mathcal{C}_{\nomodelxcausey}, \mathcal{F}_{\nomodelxcausey}\right)$ and $\ycausex = \left(\nomodelycausex, \mathcal{C}_{\nomodelycausex}, \mathcal{F}_{\nomodelycausex}\right)$ are \textbf{distribution-equivalent} if $\mathcal{F}_{\nomodelxcausey}=\mathcal{F}_{\nomodelycausex}$. 
Equivalently, there exists a unique translating bijection $\gamma: \mathcal{C}_{\nomodelxcausey} \to \mathcal{C}_{\nomodelycausex}$ such that for any $P\in\mathcal{C}_{\nomodelxcausey}$, there holds an equality of (joint) measures $\delta_{\mathcal{C}_{\nomodelxcausey}}(P) = \delta_{\mathcal{C}_{\nomodelycausex}}( \gamma(P))$.
\label{def:distr_equiv}
\end{definition}

In short, for every $\left(m,c\right)\in\mathcal{C}_{X}\times \mathcal{C}_{Y \mid X}$, there exists $\left(m^{\prime}, c^{\prime}\right) \in\mathcal{C}_{Y} \times \mathcal{C}_{X \mid Y}$ such that $m\left(\mathrm{d}x\right)\cdot c\left(\mathrm{d}y\mid x\right)=m^{\prime}\left(\mathrm{d}y\right)\cdot c^{\prime}\left(\mathrm{d}x\mid y\right)$ as probability measures, and vice versa. 
A solution to restore identifiability is to restrict  $\mathcal{C}_{\nomodelxcausey},\mathcal{C}_{\nomodelycausex}$, but this comes at the cost of not being able to learn some distributions (\cref{fig:max_like_dataset_density} (a,b), \cref{app:likelihood_equivalence}), i.e. a loss in modelling flexibility.
Our aim in the following sections is to allow for discovering causal relations, even with distribution-equivalent models.

\section{Related Work}
\label{sec:related}

One class of methods makes hard restrictions on the set of distributions ($\mathcal{C}$) which induce the causal factorisation. 
For example, linear function with non-Gaussian noise (LiNGAM) \cite{shimizu2006linear}, non-linear functions with additive noise (ANM) \cite{hoyer2008nonlinear}, and post non-linear models (PNL) \cite{zhang2012identifiability}, among others \cite{immer2022identifiability}.
These restrictions can crudely be thought of as controlling the complexity of $\mathcal{F}$.
Identifiability is proven by showing that the more complex anti-causal factorisation lies outside of the core model.
\citet{zhang2015estimation} showed that the likelihood can be used to identify the causal direction in these models (\cref{fig:max_like_dataset_density}(b)), but if the dataset is generated by a model without these restrictions, it is possible for both causal directions to achieve similar likelihoods \cite{zhang2015estimation}. 

Another class of methods assumes more flexibility but try and control a measure of complexity.
\citet{marx2019identifiability} (SLOPPY) build on previous non-identifiable methods (RECI \cite{blobaum2018cause}, QCCD \cite{tagasovska2020distinguishing}) and  assume that the causal factorisation has been generated by a model with fewer parameters than the anti-causal factorisation.
Balancing mean squared error along with the number of parameters can then identify the causal direction.
However, measuring complexity by the number of parameters can be parametrisation-dependent.
Such conceptual problems are amplified in the setting of non-parametric or over-parametrised models.
Additionally, SLOPPY also assumes low noise and additive noise.
CGNN \cite{goudet2018learning} forego strict identifiability and try to learn a causal generative model of the data using neural networks.
Their methods works with small networks but with no clear way of mitigating overfitting, their method easily achieves the same score for both causal factorisations.

Other methods either try to i) directly measure the dependence of the factorisation, based on the ICM principle, or ii) measure the complexity of a proposed direction.
CURE \cite{sgouritsa2015inference} and IGCI \cite{daniusis2012inferring} try to measure the dependence of the factorisations.
CDCI \cite{duong2021bivariate},  CDS \cite{fonollosa2019conditional} and KCDC \cite{mitrovic2018causal} try to measure the stability of the conditional distributions under different input values, arguing that the more stable conditional is more likely the cause. 
Of the above, only IGCI has proven identifiability.

We base our approach on Bayesian model selection, which has automatic mechanisms of balancing model fit and complexity. 
The prescribed procedure is straightforward ( \cref{sec:bayesian_model_good}), and was first investigated in the 90s in the context of finding Bayesian network structure \cite{heckerman1995learning,heckerman2006bayesian,heckerman1995bayesian,chickering2002optimal, geiger2002parameter}. 
However, while we argue that Bayesian model selection is helpful for causal discovery \emph{within} a Markov Equivalence Class (MEC), these early papers restricted their focus to finding network structure up to an MEC. 
This is due to a focus on linear causal relationships, which is one key setting where Bayesian model selection does not provide much benefit (see \cref{sec:non_identifiability_gaussian} for a discussion).
Indeed, the wider benefits of Bayesian methods to infer causality within an MEC has been touched upon in \citet[ch. 35]{mackay2003information}.
\citet{friedman2000gaussian} are first to apply Bayesian model selection to do bivariate causal discovery (i.e.~within an MEC). 
They compare two Gaussian process regression models, and attempt to determine which variable should be used as the input (cause). 
However, since this model was effectively an ANM, Bayesian model selection provided little added value, since causal direction is already identifiable by the likelihood alone \cite{zhang2015estimation} ( \cref{sec:freq_causal_model}). 
A similar approach was used by \citet{kurthen2019bayesian}.
\citet{stegle2010probabilistic} highlighted these issues by noting that \citet{friedman2000gaussian} worked only due to model fit. 
Like us, they acknowledge the larger benefit when $\mathcal{C}$ is not restricted. 
While their method is similar to Bayesian model selection, it is heuristically justified by Kolmogorov complexity \citep{janzing2010justifying} (see \cref{sec:kolm_complexity_doesnt_work}); as such, the impact of both model and prior on their procedure is unclear.
This is explicit in our work (\cref{sec:asymmetry_theorem}, \cref{sec:correctnes_of_bms}).

Our contributions follow in the path of \citet{friedman2000gaussian} and \citet{stegle2010probabilistic}. 
Specifically, we provide a more complete view of why and under what conditions Bayesian model selection can identify causal direction (\cref{sec:asymmetry_theorem}).
Our work gives insight into the performance when a chosen model correctly encodes the assumptions on a dataset (\cref{sec:correctnes_of_bms}), and when it does not (\cref{sec:model_misspecification}).

\section{Bayesian Inference of Causal Direction}

We explain how causal discovery can be viewed as a Bayesian model selection problem, outlining the requisite assumptions. 
We then give conditions under which Bayesian model selection discriminates even distribution-equivalent causal models.
Correctness then depends on the exact assumptions made in the model, and how well they match reality.
We analyse the case when the assumptions are correct, providing a statistical test that can be used to quantify the probability of error inherent in the procedure.
We also provide analysis when the assumptions might be wrong.

\subsection{Bayesian Model Selection for Causal Inference from First Principles}
\label{sec:bayesian_model_good}

The maximum likelihood approach outlined in \cref{sec:freq_causal_model} was restricted in its ability to simultaneously i) estimate model parameters and ii) infer causal direction. 
The core issue is that for richly-parameterised models, both causal models can express distributions which explain the observed data equally well. 
Bayesian inference provides a general framework for inferring unknown quantities in statistical models \cite{gelman2013bayesian, mackay2003information, bernardo2009bayesian} which offers solutions to this problem. 
In this section, we will describe how Bayesian inference allows us to directly infer our quantity of interest --- the causal direction --- and
the additional assumptions which need to be specified in order for this approach to succeed.

Inferring causal direction can be seen as a \textit{Bayesian model selection} problem, where we seek to distinguish which causal model (\cref{sec:freq_causal_model}) is more likely to have generated a dataset. 
The evidence for each causal direction is quantified in the posterior
\begin{align}
    P(\xcausey|\data) = \frac{P(\data|\xcausey)P(\xcausey)}{P(\data)} \,.
\end{align}
We can summarise the balance of evidence for both causal directions with the log ratio:
\begin{align}
    \log \frac{P(\xcausey|\data)}{P(\ycausex|\data)} =
    \log \frac{P(\data|\xcausey)P(\xcausey)}{P(\data|\ycausex)P(\ycausex)} \label{eq:logodds} \,.
\end{align}
Bayesian inference requires us to specify a \emph{prior} on which causal direction is more likely. 
To represent our lack of specific knowledge, we set these prior probabilities to be equal $P(\xcausey) = P(\ycausex) = 0.5$.
Thus, the above log posterior ratio will determined only by the ratio $P(\data|\xcausey) / P(\data|\ycausex)$.

To find the ratio in \cref{eq:logodds}, we are required again to specify prior measures, this time on $\mathcal{C}$.
We denote prior over distributions by $\pi$ (as opposed to distributions over observations).
This augments a causal model to form a Bayesian Causal Model (BCM).
\begin{definition}
    A \textbf{Bayesian causal model} (BCM) is a causal model $\mathcal{M}_{\mathcal{G}}$ equipped with a prior distribution over $\mathcal{C}$, that is a tuple $(\mathcal{G}, \mathcal{C}, \mathcal{F}, \pi)$, shortened to $(\mathcal{M}_{\mathcal{G}}, \pi)$.
\end{definition}
While we will discuss the consequences of selecting a particular prior later, we can now already determine some of its properties from our problem setting. 
A strict view of the ICM assumption implies that information about the distribution on causes should not provide information on the distribution of effect given cause. 
This implies that the prior should be separable, with respect to the causal factorisation, for both causal models.\footnote{\citet{guo2022causal} provide the most direct argument for this, by proving a ``causal de Finetti'' theorem, based on the requirement that additional information on the cause mechanism does not give information on the effect mechanism. Earlier, \citet{janzing2010causal} also argue that this must be the case, but from a more heuristic argument based on Kolmogorov complexity. The assumption has also been made in earlier methods \cite{stegle2010probabilistic,sgouritsa2015inference, heckerman1995bayesian}.}.
We formally define separability of priors as follows.
\begin{definition}
    Given a BCM $(\mathcal{M}_{\mathcal{G}}, \pi)$, we call prior $\pi$ \textbf{separable} with respect to $\mathcal{C}$ if it factorises as $\prod_{i \in \mathcal{V}} \pi_{i}$ for some $(\pi_i: i \in \mathcal{V}) \in \prod_{i \in \mathcal{V}} \mathcal{P}(\mathcal{C}_{i \mid \text{pa}_{\mathcal{G}}(i)})$.
\end{definition}
For $\xcausey$, this amounts to saying that the prior factorises as $\pi_{X}(P_X)\pi_{Y}(P_{Y|X})$ with $P_X \in \mathcal{C}_{X}$ and $P_{Y|X} \in \mathcal{C}_{Y|X}$.
For parametrised models, our BCMs can be represented as the graphical models in \cref{fig:ICM_graphical_model}.

Given a prior, any BCM $\left(\mathcal{M}_{\mathcal{G}},\pi\right)$ naturally induces \textit{data distribution}, i.e. a probability measure over sequences of bivariate observations, $P\left(\cdot\mid\mathcal{M}\right)$.
This is found by integrating the joint over the prior distribution.
\begin{definition}
    The \textbf{data distribution} of  $(\mathcal{M}_{\mathcal{G}}, \pi)$ is 
    \begin{align}
        P(\calcd x, \calcd y| \mathcal{M}_{\mathcal{G}}) = \int_{P^{\prime} \in \mathcal{C}}  \delta_{\mathcal{C}}(P^{\prime} )(\calcd x, \calcd y|  \mathcal{M}_{\mathcal{G}}) \pi(\calcd P^{\prime}  | \mathcal{M}_{\mathcal{G}}).
    \nonumber
    \end{align}
    The density of this measure with respect to a suitable reference measure is known as the \textbf{marginal likelihood}.
    \label{def:marg_like}
\end{definition}

Given a dataset, and under the above procedure, the most likely model $\mathcal{M}^*$ can be chosen as the one with the higher marginal likelihood
\begin{align}
\mathcal{M}^* = \begin{cases} \xcausey \text{ if } p(\data| \xcausey) > p(\data| \ycausex) \\ \ycausex \text{ if } p(\data| \xcausey) < p(\data| \ycausex) \end{cases}.
\label{eq:decision_rule}
\end{align}
This is straightforward Bayesian model selection, but applied to models that incorporate the causal assumptions of \cref{sec:freq_causal_model}. 
Following this procedure from first principles, we see that Bayesian inference \emph{prescribes} that we must specify priors $\pi$.
In specifying these priors, one implicitly constrains the sets distributions which will be used to explain the observed data.
This is consistent with earlier work, where hard constraints on these sets are imposed to ensure identifiability \cite{zhang2015estimation}. 
While these hard constraints can be represented in the priors by only assigning non-zero mass to the desired regions, priors also allow the specification of softer constraints (\cref{fig:max_like_dataset_density}(c)).

While Bayesian inference prescribes the procedure which our method will follow, it is not clear this will produce correct answers. 
The performance of all Bayesian methods, including ours, depends on how compatible our prior assumptions are with reality.
This is also the case for previous causal discovery methods that provide identifiability guarantees, since the assumptions made have to hold in reality.
In the next sections, we investigate the influence of prior design on the performance of Bayesian model selection.
Specifically, we will analyse conditions under which choices of priors cannot imply the same data distributions for the two BCMs, the guarantees we can get when the assumptions made are correct, and when they are incorrect.

\begin{figure}[t!]
\begin{subfigure}{0.48\textwidth}
    \centering
  \tikz{
 \node[obs] (x) {$X_i$};%
 \node[obs,right=of x,xshift=0.25cm] (y) {$Y_i$};
 \node[latent,left=of x,xshift=-0.15cm] (phi) {$\phi$};
 \node[latent,right=of y,xshift=0.15cm] (theta) {$\theta$};
 \plate[inner sep=0.3cm, xshift=0cm, yshift=0.12cm] {plate1} {(x) (y)} {$i = 1,\ldots, N$}; 
 \edge {x} {y};
 \edge {phi} {x};
 \edge {theta} {y} }
 \caption{}
\end{subfigure}
\begin{subfigure}{0.48\textwidth}
    \centering
  \tikz{
 \node[obs] (x) {$X_i$};%
 \node[obs,right=of x,xshift=0.25cm] (y) {$Y_i$};
 \node[latent,left=of x,xshift=-0.15cm] (phi) {$\phi$};
 \node[latent,right=of y,xshift=0.15cm] (theta) {$\theta$};
 \plate[inner sep=0.3cm, xshift=0cm, yshift=0.12cm] {plate1} {(x) (y)} {$i = 1,\ldots, N$}; 
 \edge {y} {x};
 \edge {phi} {x};
 \edge {theta} {y}  }
 \caption{}
\end{subfigure}
\caption{Graphical models for parametrised Bayesian causal models $\xcausey$ and $\ycausex$. The causal direction indicates the factorisation that encodes ICM.}
\label{fig:ICM_graphical_model}
\end{figure}

\subsection{Asymmetry Between Dataset Densities of Bayesian Causal Models}
\label{sec:asymmetry_theorem}

In \cref{sec:freq_causal_model} we discussed that the maximum likelihood score is indifferent to causal direction when the causal models are distribution-equivalent.
In general, we can expect this to cause difficulties when the distributions specified ($\mathcal{C}$) are sufficiently flexible.
By contrast, Bayesian inferences prescribes using the \textit{marginal likelihood} to guide model selection. 
Here, we show that Bayesian model selection can be sensitive to causal direction, even when using distribution-equivalent causal models. 
This shows that the marginal likelihood is capable of providing a preference between factorisations in situations where maximum likelihood cannot.
While this result does not, by itself, imply that Bayesian model selection will identify the correct causal direction (which we discuss later),
it does offer insight into the difference between the two approaches.
All proofs are in \cref{app:sec:proofs}.

We will find a necessary condition under which marginal likelihood cannot discriminate BCMs.
If the condition does not hold, the BCMs can surely be discerned.
We thus define a notion of equivalence, similar to \cref{def:distr_equiv}, but for BCMs and marginal likelihoods. 
\begin{definition}
Given two BCMs $(\xcausey, \pi_{\nomodelxcausey})$,
$(\ycausex, \pi_{\nomodelycausex})$, say they are \textbf{Bayesian distribution-equivalent} if   $P\left(\cdot\mid\xcausey\right)=P\left(\cdot\mid\ycausex\right)$,
i.e. for all $N\in\mathbf{N}$, and for all $\left(\mathbf{x}^{N},\mathbf{y}^{N}\right)\in\left(\mathcal{X}\times\mathcal{Y}\right)^{N}$,
it holds that $p\left(\mathbf{x}^{N},\mathbf{y}^{N}\mid\xcausey\right)=p\left(\mathbf{x}^{N},\mathbf{y}^{N}\mid\ycausex\right)$.
\end{definition}

In the following, we show that if two causal models are not distribution-equivalent, then there exist Bayesian causal models that are not Bayesian-distribution equivalent.
This demonstrates that if maximum likelihood can distinguish causal models, then suitably constructed Bayesian causal models can be differentiated using the marginal likelihood.
\begin{proposition}
Given two BCMs $(\xcausey, \pi_{\nomodelxcausey})$, $(\ycausex, \pi_{\nomodelycausex})$, suppose that there exists a subset $\mathcal{C}_{\Delta} \subset \mathcal{C}_{\nomodelxcausey}$ such that $\pi_{\nomodelxcausey}(\mathcal{C}_{\Delta}) > 0$, and $\delta_{\mathcal{C}_{\nomodelxcausey}}(\mathcal{C}_{\Delta}) \cap \mathcal{F}_{\nomodelycausex}$ is empty.
Then the two Bayesian causal models are not Bayesian distribution-equivalent.
\end{proposition}

We are ultimately interested in the case where the underlying causal models are distribution-equivalent.
To help with this, we introduce \emph{separable-compatibility}.
\begin{definition}
Let $(\xcausey, \pi_{\nomodelxcausey})$,
$(\ycausex, \pi_{\nomodelycausex})$ be two BCMs where the underlying causal models are distribution-equivalent, denoting $\gamma$ as the corresponding translation mapping $\gamma: \mathcal{C}_{\nomodelxcausey} \to \mathcal{C}_{\nomodelycausex}$ (in \cref{def:distr_equiv}). Say the two are \textbf{separable-compatible} if: i) the pushforward $\gamma\#\pi_{\nomodelxcausey}$ is separable with respect to $\mathcal{C}_{\nomodelycausex}$, ii) $\gamma^{-1}\#\pi_{\nomodelycausex}$ is separable with respect to $\mathcal{C}_{\nomodelxcausey}$.
\end{definition}

Under the above definition, we see that the ICM assumption is preserved in the anti-causal factorization.
We now show that if two Bayesian causal models are not separable-compatible, they cannot be Bayesian distribution-equivalent.

\begin{proposition}
Given two BCMs $(\xcausey, \pi_{\nomodelxcausey})$, $(\ycausex, \pi_{\nomodelycausex})$, where the underlying causal models are distribution-equivalent, the two Bayesian causal models are Bayesian distribution-equivalent only if they are separable-compatible.
\end{proposition}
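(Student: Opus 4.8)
The plan is to collapse Bayesian distribution-equivalence into the single measure-theoretic identity $\gamma\#\pi_{\nomodelxcausey} = \pi_{\nomodelycausex}$ on the backward parameter space $\mathcal{C}_{\nomodelycausex}$, from which both clauses of separable-compatibility will fall out immediately, using that the priors of the two BCMs are themselves separable (this is the ICM assumption encoded in each causal direction). The entire burden of the proof is therefore to establish that identity.

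First I would rewrite the forward data distribution as a mixture living over the \emph{backward} parameter space. Writing $p_{\delta(P)}$ for the joint density of $\delta_{\mathcal{C}_{\nomodelxcausey}}(P)$, the marginal likelihood of $\xcausey$ on a size-$N$ dataset is $\int_{\mathcal{C}_{\nomodelxcausey}} \prod_{i=1}^N p_{\delta_{\mathcal{C}_{\nomodelxcausey}}(P)}(x_i,y_i)\, \pi_{\nomodelxcausey}(\calcd P)$. Changing variables by the translating bijection $Q=\gamma(P)$ and using distribution-equivalence in the form $\delta_{\mathcal{C}_{\nomodelxcausey}}(\gamma^{-1}(Q)) = \delta_{\mathcal{C}_{\nomodelycausex}}(Q)$, this becomes $\int_{\mathcal{C}_{\nomodelycausex}} \prod_{i=1}^N p_{\delta_{\mathcal{C}_{\nomodelycausex}}(Q)}(x_i,y_i)\,(\gamma\#\pi_{\nomodelxcausey})(\calcd Q)$, i.e. exactly the data distribution of the BCM $(\ycausex, \gamma\#\pi_{\nomodelxcausey})$. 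Bayesian distribution-equivalence thus asserts that the two priors $\gamma\#\pi_{\nomodelxcausey}$ and $\pi_{\nomodelycausex}$ induce identical marginal likelihoods, for every $N$ and every dataset, on the single common model $\ycausex$.

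The main step, which I expect to be the real obstacle, is to upgrade this equality of marginal likelihoods into an equality of the underlying priors. Both sides are mixtures of iid likelihoods, i.e. the finite-dimensional marginals of an infinite exchangeable sequence whose directing (de Finetti) measure is the pushforward of the prior under $\delta_{\mathcal{C}_{\nomodelycausex}}$. Since the equality holds for all $N$ simultaneously, the two exchangeable laws coincide, so by uniqueness of the de Finetti directing measure their pushforwards under $\delta_{\mathcal{C}_{\nomodelycausex}}$ agree; invoking the injectivity of $\delta_{\mathcal{C}_{\nomodelycausex}}$ (with the standard Borel/measurability hypotheses needed to pull a measure on joint distributions back to $\mathcal{C}_{\nomodelycausex}$) then gives $\gamma\#\pi_{\nomodelxcausey} = \pi_{\nomodelycausex}$. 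This is where care is needed: the conclusion genuinely uses \emph{all} sample sizes, since for any fixed $N$ distinct mixing measures can already reproduce the same $N$-marginal, so the quantifier over $N$ in the definition of Bayesian distribution-equivalence cannot be dropped.

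Finally I would read off separable-compatibility. By hypothesis $\pi_{\nomodelycausex}$ is separable with respect to $\mathcal{C}_{\nomodelycausex}$, so the identity $\gamma\#\pi_{\nomodelxcausey}=\pi_{\nomodelycausex}$ immediately yields clause (i). Because $\gamma$ is a bijection, that same identity is equivalent to $\gamma^{-1}\#\pi_{\nomodelycausex}=\pi_{\nomodelxcausey}$, and since $\pi_{\nomodelxcausey}$ is separable with respect to $\mathcal{C}_{\nomodelxcausey}$ this is precisely clause (ii). Hence the two BCMs are separable-compatible, establishing the stated necessary condition.
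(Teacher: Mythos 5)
Your proposal is correct and takes essentially the same route as the paper: the paper packages your ``main step'' as a standalone lemma (via the Hewitt--Savage representation theorem, equality of the data distributions for all $N$ forces equality of the mixing measures $\delta_{\mathcal{C}_{\nomodelxcausey}}\#\pi_{\nomodelxcausey} = \delta_{\mathcal{C}_{\nomodelycausex}}\#\pi_{\nomodelycausex}$), then cancels the injective map $\delta_{\mathcal{C}_{\nomodelycausex}}$ through the relation $\delta_{\mathcal{C}_{\nomodelxcausey}} = \delta_{\mathcal{C}_{\nomodelycausex}}\circ\gamma$ to reach exactly your key identity $\gamma\#\pi_{\nomodelxcausey} = \pi_{\nomodelycausex}$, and reads off separable-compatibility from separability of the original priors just as you do. The only difference is bookkeeping --- you change variables through $\gamma$ before invoking de Finetti/Hewitt--Savage uniqueness, whereas the paper invokes uniqueness first and then factors through $\gamma$ --- and both arguments rest on the same ingredients (uniqueness of the directing measure, injectivity of $\delta_{\mathcal{C}_{\nomodelycausex}}$, and the ICM-separability of the given priors).
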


For parametrised models, we require that the parametrisation is injective for the same result to hold.
Being separable-compatible then implies that each BCM can factorise as both graphical models in \cref{fig:ICM_graphical_model}.
\begin{corollary}
Given two injectively parametrised BCMs, $\mathcal{B}^1 := (\xcausey, \pi_{\nomodelxcausey})$ that factorises as \cref{fig:ICM_graphical_model}(a), and $\mathcal{B}^2 := (\ycausex, \pi_{\nomodelycausex})$ that factorises as \cref{fig:ICM_graphical_model}(b), assume the underlying causal models are distribution-equivalent. The two Bayesian causal models are Bayesian distribution-equivalent only if: i) $\mathcal{B}^1$ also factorises as \cref{fig:ICM_graphical_model} (b), ii) $\mathcal{B}^2$ also factorises as \cref{fig:ICM_graphical_model} (a).
\label{cor:parametrise_corrollary}
\end{corollary}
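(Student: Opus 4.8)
The plan is to deduce the corollary from the preceding Proposition, translating its distribution-level conclusion (separable-compatibility) into the parameter-level graphical factorisations of \cref{fig:ICM_graphical_model}, using injectivity of the parametrisation as the bridge. Since the underlying causal models are distribution-equivalent and the two BCMs are assumed Bayesian distribution-equivalent, the Proposition immediately gives that they are separable-compatible: the pushforward $\gamma\#\pi_{\nomodelxcausey}$ is separable with respect to $\mathcal{C}_{\nomodelycausex}$, and $\gamma^{-1}\#\pi_{\nomodelycausex}$ is separable with respect to $\mathcal{C}_{\nomodelxcausey}$. It remains to reinterpret these two conditions as claims i) and ii).

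For claim i), I would exhibit a representation of $\mathcal{B}^1$ with the graph of \cref{fig:ICM_graphical_model}(b). The candidate is the BCM $(\ycausex, \gamma\#\pi_{\nomodelxcausey})$. I would first check it has the same data distribution as $\mathcal{B}^1$: applying the change-of-variables formula to the integral in \cref{def:marg_like} and using the identity $\delta_{\mathcal{C}_{\nomodelxcausey}}(P) = \delta_{\mathcal{C}_{\nomodelycausex}}(\gamma(P))$ from \cref{def:distr_equiv},
\[
\int_{\mathcal{C}_{\nomodelycausex}} \delta_{\mathcal{C}_{\nomodelycausex}}(Q)\,(\gamma\#\pi_{\nomodelxcausey})(\mathrm{d}Q) = \int_{\mathcal{C}_{\nomodelxcausey}} \delta_{\mathcal{C}_{\nomodelxcausey}}(P)\,\pi_{\nomodelxcausey}(\mathrm{d}P) = P(\cdot \mid \xcausey).
\]
Separable-compatibility condition i) says $\gamma\#\pi_{\nomodelxcausey}$ is separable, i.e. it places a product prior on the two factors $(P_Y, P_{X \mid Y})$ of the anti-causal factorisation; pulling this back through the injective parametrisation of $\mathcal{C}_{\nomodelycausex}$ turns the product of distribution-level measures into a factorising prior $\pi(\theta)\pi(\phi)$ over the parameters governing $P_Y$ and $P_{X \mid Y}$, which is exactly the a priori independence structure encoded by \cref{fig:ICM_graphical_model}(b). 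Thus $\mathcal{B}^1$ admits a representation matching \cref{fig:ICM_graphical_model}(b), giving i).

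Claim ii) follows by the symmetric argument, interchanging the roles of the two factorisations and replacing $\gamma$, $\pi_{\nomodelxcausey}$ by $\gamma^{-1}$, $\pi_{\nomodelycausex}$: separability of $\gamma^{-1}\#\pi_{\nomodelycausex}$ produces the representation $(\xcausey, \gamma^{-1}\#\pi_{\nomodelycausex})$ of $\mathcal{B}^2$ with the same data distribution and a factorising parameter prior in the form of \cref{fig:ICM_graphical_model}(a).

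I expect the delicate step to be the very last translation in each case: passing from separability of the pushforward measure \emph{at the level of distributions} to a genuine parametrised factorisation in the form of the graphical models. This is where injectivity of the parametrisation is indispensable. Injectivity ensures that a product structure over distributions corresponds to a priori independence of the parameters $\theta$ and $\phi$, and (together with injectivity of $\delta_{\mathcal{C}}$) that the prior over distributions inducing a given data distribution is pinned down, so that the separable representation is not an artefact of a different, non-separable parameter prior yielding the same law. Without injectivity, distinct parameter priors could collapse to the same distribution-level prior and the product structure need not transfer, so the graphical reading of the corollary would break down.
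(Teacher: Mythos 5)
Your proof is correct and matches the paper's own argument, which simply remarks that the corollary ``directly follows'' from the preceding proposition once $\delta$ is viewed as the (injective) map from the parameter space $\Theta \times \Phi$ to $\mathcal{P}(\mathcal{X}\times\mathcal{Y})$. You take the same route---separable-compatibility from the proposition, then the change of variables through $\gamma$ and the injective parametrisation to turn distribution-level separability into a factorising parameter prior---just with the details the paper leaves implicit spelled out.
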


This result can be interpreted as follows: given models corresponding to each causal direction, suppose that both models are distribution-equivalent. 
Supposing that the priors for each causal direction are specified in line with the ICM assumption, one can compute the anti-causal factorisation of each model, and see which prior it induces in the converse model. 
If this induced prior is incompatible with the ICM assumption (i.e. it is not separable with respect to the anti-causal factorisation), then we can deduce that the two models cannot be Bayesian distribution-equivalent.
In short, if ICM does not hold in the anti-causal factorisation, there exist datasets for which the marginal likelihood of the two models is not equal.
This is a necessary condition for distinguishing causal directions.
We analyse specific models in \cref{sec:analysis_of_models}.
Next, we analyse correctness and provide a  test to quantify the level overlap between the models.

\begin{figure*}[t]
    \centering
    \begin{subfigure}{0.48\textwidth}
    \resizebox{\linewidth}{!}{
        \includegraphics[width=\textwidth]{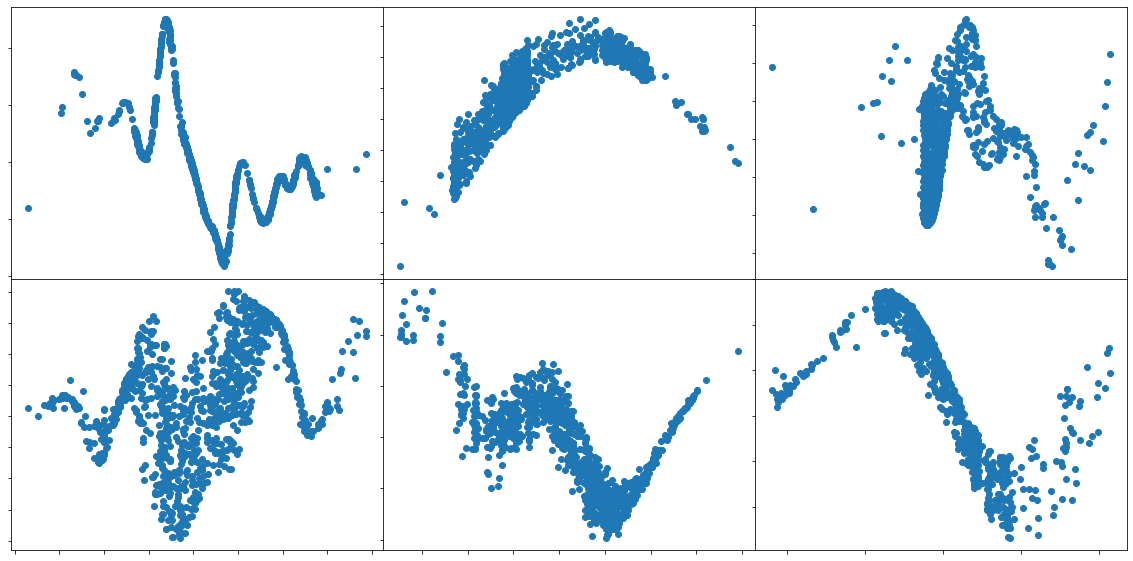}
    }
    \caption{}
    \end{subfigure}
    \begin{subfigure}{0.48\textwidth}
    \resizebox{\linewidth}{!}{
        \includegraphics[width=\textwidth]{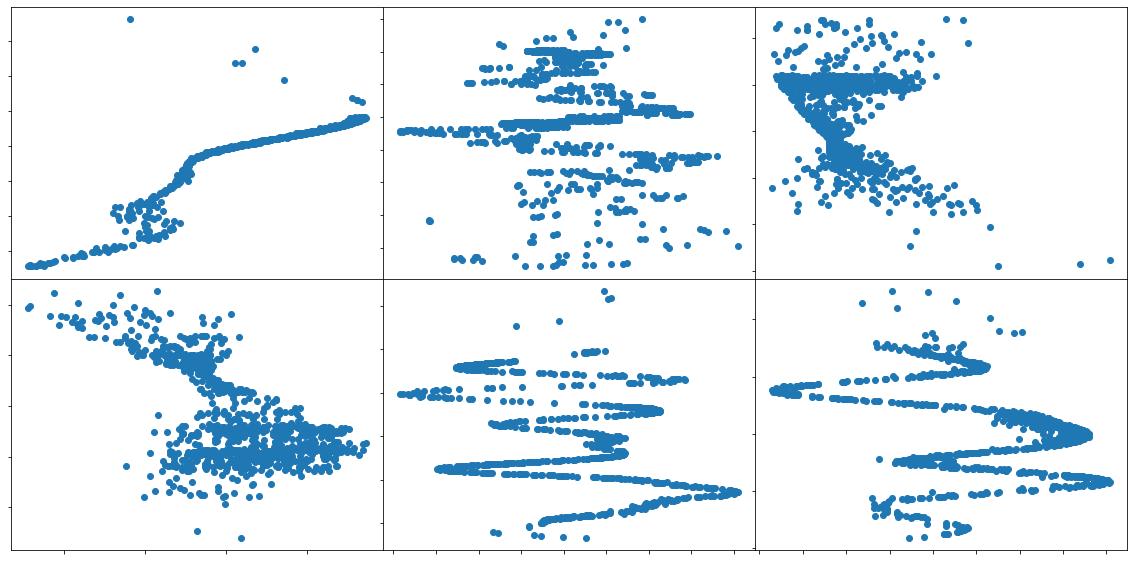}
    }
    \caption{}
    \end{subfigure}
    \caption{Samples of datasets from our chosen GPLVM model. ALL figures have the variable $X$ on the x-axis and the variable $Y$ on the y-axis. (a) Shows 6 datasets sampled from GPLVM with $\xcausey$. (b) Shows 6 datasets sampled from GPLVM with $\ycausex$. The figures show that the data distribution varies between the two Bayesian causal models.}
    \label{fig:model_samples}
\end{figure*}

\subsection{Correctness of Bayesian Model Selection}
\label{sec:correctnes_of_bms}

Distinguishing between models is necessary but not sufficient for correctly identifying the causal direction. 
Correctness depends on how well the assumptions made in the method match reality. 
We follow the causal literature by asking when Bayesian model selection will identify the correct causal direction, when the model is correctly specified.
The causal literature commonly aims to prove a strict notion of identifiability, where the correct causal direction is \emph{always} be recovered as $N\!\to\!\infty$ when assumptions hold. 
Here, we follow \citet{guyon2019evaluation} in quantifying the probability of making an error in identifying the causal direction.

The assumptions made in a Bayesian model are specified by the model structure and prior densities. 
In this section, we thus assume that our assumptions hold, and that the data is generated from the chosen BCMs.
Given the true causal direction and the decision rule in \cref{eq:decision_rule}, we find the probability of error by integrating over the region where the wrong model would be selected, i.e.
\begin{align}
    P(E| \xcausey) =& \int_{\mathcal{R}_Y} p(\data| \xcausey ) \calcd \data \,, \label{eq:overlap_error}
\end{align}
where $\mathcal{R}_Y = \{\data \suchthat p(\data|\ycausex) > p(\data|\xcausey)\}$.
The total probability of error can be written as  (\cref{sec:app:total_prob_error})
\begin{align}
    P(E) = \frac{1}{2} (1-  \text{TV}[P(\cdot| \xcausey ),  P(\cdot| \ycausex )]), 
    \label{eq:p_error}
\end{align}
where $\text{TV}(P,Q) \in [0,1]$ is the total variation distance.
We can see that the probability of making an error falls as the distance between the data distributions increases.
For identifiable models where the data distributions are completely separate, $\text{TV} = 1$ and hence $P(E) = 0$, showing that the same guarantees hold for Bayesian model selection as in these strictly identifiable settings.
For models without hard restrictions, this probability of error may be positive (overlap in \cref{fig:max_like_dataset_density}(c)). 
However, this may still be preferable to the error incurred by using an overly-restricted model.

\paragraph{Statistical Test of Asymmetry:}
We can estimate \cref{eq:p_error} to find the error under the assumption that a model is correct.
This can be used as a statistical test for whether causality can be discerned between two BCMs.
The procedure is to sample multiple datasets from the two BCMs, and classify the causal direction by using \cref{eq:decision_rule}.
We do this here for our model choice: the GPLVM\footnote{We use an approximation to the marginal likelihood that we describe later.}.
For 100 datasets of 1000 samples each, we get an approximate probability of error of 0 with a standard deviation upper bounded by $0.05$ (\cref{sec:derivation_poe}).
To aid intuition, we plot samples (\cref{fig:model_samples}) from the GPLVM prior for the two causal assumptions.
Here, we can see that while datasets from $\xcausey$ are multimodal when conditioned on $Y$, datasets from $\ycausex$ are multimodal when conditioned on $X$.
This shows that the likely datasets are strongly influenced by the causal assumptions.
When the GPLVM is a good description of the data generating process, we thus expect to obtain the correct causal direction with high probability.

\subsection{Model Misspecification}
\label{sec:model_misspecification}

The previous result relied on assuming that our model describes the data generating process well.
When our models deviate from the true data generating process, our estimate for the probability of error is incorrect.
In this case, we can bound the difference between the true probability of error and the one of our models.
We denote the true data distributions as $Q(\cdot | \nomodelxcausey)$ and $Q(\cdot | \nomodelycausex)$, but we use our models to decide on the causal direction, i.e. the decision rule in \cref{eq:decision_rule}.
We can then find a bound (\cref{sec:app:prob_error_misspecify})
\begin{align}
    2|Q(\text{E}) \! -\! P(\text{E})| \leq  &\text{TV}[  Q(\cdot | \nomodelxcausey), P(\cdot | \xcausey)] + \nonumber\\ &\text{TV}[  Q(\cdot | \nomodelycausex), P(\cdot | \ycausex)].
    \nonumber 
\end{align}
Thus, the difference between the model error and the true error is bounded by the total variation between the true and our model's data distribution, i.e. when assumptions are violated only mildly, we can still accurately estimate the probability of error.
However, large deviations from the assumptions will result in an inaccurate probability of error.
We test the GPLVM under different data generating assumptions in \cref{sec:exp_real_synthetic}.

\begin{table*}[t]
\centering
\caption{Performance comparisons. Numbers convey the ROC AUC metric (higher is better). Best results are in bold while the best results from the baselines are underlined. Our method (GPLVM) outperforms competing methods. CDCI contains multiple methods, we show the result of the best method on each dataset.}
\begin{tabular}{llllll}
 \toprule
 Methods &   CE-Cha & CE-Multi & CE-Net & CE-Gauss & CE-Tueb\\
\midrule
  LiNGAM  & 57.8 & 62.3 & 3.3 & 72.2 & 31.1 \\
  ANM   & 43.7 & 25.5 & 87.8 & 90.7 & 63.9  \\
  PNL  & \underline{78.6} & 51.7 & 75.6 & 84.7 & 73.8 \\
  IGCI  & 55.6 & 77.8 & 57.4 & 16.0 & 63.1  \\
  RECI  & 59.0 & 94.7 & 66.0 & 71.0  & 70.5 \\
  SLOPPY  & 60.1 & 95.7 & 79.3 & 71.4 & 65.3 \\
  CGNN   & 76.2 & 94.7 & 86.3 & 89.3 & \underline{76.6} \\
  GPI   & 71.5 & 73.8 & 88.1 & 90.2 & 70.6  \\
  CDCI (best method reported) & 72.2 & \underline{96.0} &  \underline{94.3} & \textbf{\underline{91.8}} & 58.7 \\
  \midrule
   \textbf{GPLVM} & \textbf{81.9} & \textbf{97.7} & \textbf{98.9} & 89.3 & \textbf{78.3}\\
 \bottomrule
\end{tabular}
\label{table:perfect_results}
\end{table*}

\section{Model Choice}
\label{sec:method}

The practical advantage of our approach is the ability to specify causal models with a large $\mathcal{C}$, which may help with reducing model misspecification. 
To use this advantage, we choose to use the (conditional) GPLVM \cite{dutordoir2018gaussian} as a prior. 
Densities for this model are constructed by warping a Gaussian random (latent) variable by a non-linear function (as in a VAE \cite{kingma2014vae}), with a Gaussian Process (GP) prior. 
The latent and flexible prior on $f,g$ allows highly non-Gaussian and heteroskedastic distributions for $\mathcal{C}$.
For $\xcausey$, $\mathcal{C}_{\nomodelxcausey}$ takes the form
\begin{align}
    p(y_i|x_i, f, \xcausey) &= \int p(y_i|f(x_i, w_i), x_i, w_i) p(w_i) \calcd w_i \,, \nonumber\\
    p(x_i|g, \xcausey) &= \int p(x_i|g(v_i), v_i) p(v_i) \calcd v_i \,,\nonumber
\end{align}
where $w_i, v_i$ are standard Gaussian distributed and priors $f|\boldsymbol\lambda\sim\mathcal{GP}$, $g|\boldsymbol\lambda\sim\mathcal{GP}$, with $\boldsymbol{\lambda}$ collecting all hyperparameters. 
The marginal likelihoods are found by integrating over the priors of $f,g$.
We use existing variational inference schemes \cite{dutordoir2018gaussian,lalchand2022generalised} to approximate the marginal likelihoods ( \cref{app:gplvm_model_details}).
This can cause additional loss in performance compared to ideal Bayesian model comparison \cite{blei2017variational}.
However, we found our model to have a near zero probability of error with variational inference (\cref{sec:correctnes_of_bms}).
Due to the symmetry of the problem, we assume the same prior on the cause, and effect given cause in the two BCMs. 
We expand on integrating out hyperparameters in \cref{sec:evidence_approx}.

\section{Experiments}

Having laid out our method, we now test it on a mixture of real and synthetic datasets.
We test our method on benchmark datasets of a wide variety of dataset-generating distributions, showcasing the benefits of removing model restrictions.
These datasets are not sampled from our model, and hence provide empirical verification of \cref{sec:model_misspecification}, i.e. that good performance can be obtained with imperfect models.
The details of our method are in  \cref{sec:gplvm_details}. 

Following previous work \cite{mooij2016distinguishing}, we use the \textit{Area under the curve} (AUC) of the \textit{Receiver Characteristic Operator} (ROC) metric, ensuring there is $50\%$ of each causal direction present to avoid directional bias.
We normalise all datasets following \citet{reisach2021beware}.
Additional experiments are in \cref{sec:additional_exp}.

\subsection{Real and Synthetic Data}
\label{sec:exp_real_synthetic}

We test the GPLVM under model misspecification on a wide variety of data-generating mechanisms, not all generated from known identifiable models, the full details of which are in \cref{sec:dataset_details}.
The results of our method (GPLVM) are shown in \cref{table:perfect_results}, along with competing methods.
Results for SLOPPY were obtained by rerunning the author's code (\cref{sec:sloppy_details}), results for CDCI are from \cite{duong2021bivariate}, and the rest are taken from \cite{guyon2019evaluation}.
 
Patterns to note here are that methods with restrictive assumptions, in exchange for strict identifiability, do not perform very well.
Methods with weaker assumptions and without strict identifiability perform better.
The poor performance of LiNGAM \cite{shimizu2006linear} can be attributed to the fact that few of the datasets contain linear functions.
ANM is seen to perform well when the datasets contain additive noise (CE-Gauss). 
PNL is less restrictive than ANM, which contributes to its better performance on most datasets.
Methods dependent on low noise, such as IGCI, RECI, and SLOPPY only perform well on CE-Multi.
RECI and SLOPPY also rely on the additive noise assumption, explaining their similar performance; we observe that SLOPPY performs better in most cases due to its better complexity control.
More flexible methods based informally on the ICM assumption such as CGNN, GPI, and CDCI tend to perform better across all datasets.
Although CGNN uses neural networks, it requires additional datasets to tune its complexity.
GPI uses a similar model as us, but their inference method differs. 
CDCI is a class of methods and the reported results are the best of 5 different methods.
Our approach, labelled GPLVM, performs well on datasets regardless of the data generating assumptions, owing to its ability to model flexible densities.
These results demonstrate the strength of our approach in identifying causal direction for more realistic assumptions.

\section{Conclusion}

In this work, we show that causal discovery with Bayesian model selection allows for removing restrictions on modelling capability that may hamper performance on real world datasets.
Starting from first principles, we show how to view causal discovery as Bayesian model selection, encoding important causal assumptions.
We then show that Bayesian model selection can infer causality in cases where flexible model choices inhibit likelihood based discovery.
We exhibit the reversibility of the ICM assumption as the key underlying mechanism for this.
We also discuss the correctness of our method and provide a statistical method for quantifying the probability of error of chosen Bayesian priors.
We show that under mild model misspecification, the estimated probability of error can retain its accuracy.
We significantly outperform previous methods on a wide range of data generating processes, owing to the removal of restrictions on the model.
Such an approach is vital for expanding the use of causal discovery to real world datasets.
While we provide a statistical test for quantifying cross-model overlap, an open question is to find universal, practical conditions for model equivalence.
An avenue of further interest would be to use deeper models \cite{damianou2013deep}.

\section*{Impact Statement}

This paper presents work whose goal is to advance the field of Machine Learning. There are many potential societal consequences of our work, none which we feel must be specifically highlighted here.

\bibliography{bibliography}
\bibliographystyle{plainnat}

\newpage
\appendix
\onecolumn

\section{Background}
\label{app:background}

\subsection{Markov Equivalence Class}

\begin{definition}
 Given a directed acyclic graph (DAG) $\mathcal{G}$ and a joint distribution $P(X_1, \ldots, X_D)$, the distribution is said to satisfy the \textbf{Markov property} if for disjoint sets $\mathbf{X}_i, \mathbf{X}_j, \mathbf{X}_k$
\begin{align}
    \mathbf{X}_i \indep_{\mathcal{G}} \mathbf{X}_j | \mathbf{X}_k \implies \mathbf{X}_i \indep \mathbf{X}_j | \mathbf{X}_k,
\end{align}
where $\indep_{\mathcal{G}}$ denotes d-separation in $\mathcal{G}$ and $\indep$ denotes independence \cite{peters2017elements}.
\end{definition}

There can be more than one distribution that is Markov with respect to a DAG $\mathcal{G}$. If there exists a set of distributions that is Markov with respect to two distinct DAGs, $\mathcal{G}_1$ and $\mathcal{G}_2$, then $\mathcal{G}_1$ and $\mathcal{G}_2$ are \textit{Markov equivalent}.
The set of all Markov equivalent DAGs is called a \textit{Markov equivalence class}.
\citet{verma1990equivalence} provide a graphical criteria for determining whether two graphs are Markov equivalent.
Simply stated, two DAGs are Markov equivalent if they share the same adjacencies and the same colliders. 
This effectively states that the respective Markov distributions must obey the same set of conditional independences.

\subsection{Distribution-equivalence and causal identification}
\label{app:likelihood_equivalence}
In this section, we expand on the relation between distribution-equivalence and causal identifiability.
For readability, we state the relevant definitions again.

\subsubsection{Distribution-equivalence}

For clarity, we reintroduce the definition of a causal model.
\begin{definition}
    A causal model is a tuple $\mathcal{M}_{\mathcal{G}} = (\mathcal{G}, \mathcal{C}, \mathcal{F})$, where $\mathcal{G}$ is a DAG with vertex set $\mathcal{V}$, and $\mathcal{C}$ is a set of conditional distributions
    \begin{align}
        \mathcal{C} = \prod_{i \in \mathcal{V}} \mathcal{C}_{i \mid \text{pa}_{\mathcal{G}}(i)} \subset \prod_{i \in \mathcal{V}} \mathcal{K}(\mathcal{X}_{\text{pa}_{\mathcal{G}}(i)} \to \mathcal{X}_i) 
    \end{align}
    that are Markov with respect to $\mathcal{G}$. Given $P = (P_i: i \in \mathcal{V}) \in \mathcal{C}$, define  $\delta_{\mathcal{C}}: \mathcal{C} \to \mathcal{P}(\mathcal{X}\times \mathcal{Y})$ as the map that assembles $P$ into the corresponding joint
    \begin{align}
        \delta_{\mathcal{C}}(P)(\calcd x_{\mathcal{V}}) = \prod_{i \in \mathcal{V}} P_i(\calcd x_i \mid x_{\text{pa}_{\mathcal{G}(i)}}).
    \end{align}
    Finally, define $\mathcal{F}$ as the set of induced joint distributions $\mathcal{F} = \{ \delta_{\mathcal{C}}(P) : P \in \mathcal{C} \}$.
\end{definition}

We restate the definition of  distribution-equivalence from \cite{geiger2002parameter}.
\begin{definition}
Two causal models $\xcausey=\left(\nomodelxcausey, \mathcal{C}_{\nomodelxcausey}, \mathcal{F}_{\nomodelxcausey}\right)$ and $\ycausex=\left(\nomodelycausex, \mathcal{C}_{\nomodelycausex}, \mathcal{F}_{\nomodelycausex}\right)$ are \textbf{distribution-equivalent} if $\mathcal{F}_{\nomodelxcausey}=\mathcal{F}_{\nomodelycausex}$. 
Equivalently, there exists a unique translating bijection $\gamma: \mathcal{C}_{\nomodelxcausey} \to \mathcal{C}_{\nomodelycausex}$ such that for any $P\in\mathcal{C}_{\nomodelxcausey}$, there holds an equality of (joint) measures $\delta_{\mathcal{C}_{\nomodelxcausey}}(P) = \delta_{\mathcal{C}_{\nomodelycausex}}( \gamma(P))$.
\label{def:app:distr_equiv}
\end{definition}

Distribution equivalence implies that for every  $\left( m , c \right)\in\mathcal{C}_{X}\times\mathcal{C}_{Y\mid X}$, there exists $\left( m^{\prime}, c^{\prime} \right)\in\mathcal{C}_{Y}\times\mathcal{C}_{X\mid Y}$ such that
\[
m\left(\mathrm{d}x\right)\cdot c \left(\mathrm{d}y\mid x\right)=m^{\prime}\left(\mathrm{d}y\right)\cdot c^{\prime}\left(\mathrm{d}x\mid y\right).
\]
Thus, given a dataset $\mathcal{D}^{N}=\left(\mathbf{x}^{N},\mathbf{y}^{N}\right)$ (and for sufficiently large $N$), maximum likelihood cannot distinguish distribution-equivalent causal models.
Hence, causal models that are distribution-equivalent, are not identifiable by maximum likelihood.
Note that Markov equivalent graphs need not imply distribution-equivalent causal models \cite{geiger2002parameter}. 

\subsubsection{Causal identifiability.}
If two causal models, $\xcausey$ and $\ycausex$, are not distribution equivalent, then there exists a $\left( m , c \right) \in\mathcal{C}_{X} \times \mathcal{C}_{Y\mid X}$ such that for all $\left( m^{\prime}, c^{\prime} \right)\in\mathcal{C}_{Y} \times \mathcal{C}_{X\mid Y}$,
\[
m\left(\mathrm{d}x\right)\cdot c \left(\mathrm{d}y\mid x\right)\neq m^{\prime}\left(\mathrm{d}y\right)\cdot c^{\prime}\left(\mathrm{d}x\mid y\right).
\]
In this case, there exists some dataset $\mathcal{D}^N$ such that (for large enough $N$) maximum likelihood can be used to identify the causal model.

We retain the traditional notion of causal identifiability if we force $\mathcal{F}_{\nomodelxcausey}$ and $\mathcal{F}_{\nomodelycausex}$ to be disjoint.
\begin{definition} (adapted from \citet[ch.~2]{guyon2019evaluation})
    Given two causal models $\xcausey$ and $\ycausex$, the models are said to be \textbf{identifiable} if for every $\left( m , c \right) \in \mathcal{C}_{X}\times\mathcal{C}_{Y\mid X}$,  there is no $\left( m^{\prime}, c^{\prime} \right) \in \mathcal{C}_{Y}\times\mathcal{C}_{X\mid Y}$ such that
    \[m\left(\mathrm{d}x\right)\cdot c\left(\mathrm{d}y\mid x\right) = m^{\prime}\left(\mathrm{d}y\right)\cdot c^{\prime}\left(\mathrm{d}x\mid y\right).
    \]
\end{definition}
The above definition ensures that the two causal models will never get the same maximum likelihood score, regardless of the dataset.

An example of an identifiable causal model is when $\mathcal{C}$ is restricted to an additive noise form \cite{hoyer2008nonlinear}; these are collectively called additive noise models (ANM). 
In this case, we can define the conditional family that induces $\mathcal{F}_{\nomodelxcausey}$ as 
\[
\quad\mathcal{C}_{Y\mid X}=\left\{ P_{\theta_{Y\mid X}} : P_{\theta_{Y \mid X}}(Y|X)  = \theta_{Y \mid X}(X) + N,  \theta_{Y \mid X} \in \Theta_{Y \mid X}  \right\},
\]
where $\Theta_{Y \mid X}$ is an arbitrary space of non-linear functions, $N$ is sampled from some arbitrary distribution and $N \indep Y$.
For this choice of conditional distribution family, \citet[thm. 1]{hoyer2008nonlinear} show that, unless some complicated conditions hold, the backward factorisation of an ANM causal model is not expressible as an ANM model.
Simply put, suppose $\mathcal{C}_{Y\mid X}$ and $\mathcal{C}_{X\mid Y}$ are additive non-linear models, then if $P \in \mathcal{F}_{\nomodelycausex}$, then generally $P \notin \mathcal{F}_{\nomodelxcausey}$.  
Hence, additive noise models are identifiable.

\citet{zhang2015estimation} demonstrated that for a Structural Causal Model (SCM) assuming independence of noise on the effect from the cause, certain restrictions on the SCM are necessary to ensure identifiability.
If the restrictions imply that the backward factorisation does not admit independence between the noise and the input, the causal model is identifiable.
This assumption on the conditional family for $\mathcal{F}_{\nomodelxcausey}$ can be written as 
\[
\quad\mathcal{C}_{Y\mid X}=\left\{ P_{\theta_{Y\mid X}} : P_{\theta_{Y \mid X}}(Y|X)  = \theta_{Y \mid X}(X, N), Y\indep N,  \theta_{Y \mid X} \in \Theta_{Y \mid X}  \right\},
\]
with an analogous family $\mathcal{C}_{\nomodelycausex}$, with the same parameter space, $\Theta:= \Theta_{Y \mid X} =  \Theta_{X \mid Y}$ and the same noise distribution.
The causal models with this family are then identifiable if $\Theta$ is restricted appropriately such that  $P \in \mathcal{F}_{\nomodelxcausey} \implies P\notin \mathcal{F}_{\nomodelycausex}$.
\citet{zhang2015estimation} also formally show that maximum likelihood can be used to identify such models.

\section{Discussion on  Kolmogorov Complexity, Causality and Bayes}
\label{sec:kolm_complexity_doesnt_work}

\subsection{Kolmogorov Complexity and Causality}
The \textbf{Kolmogorov complexity} of a string $x$, denoted $K(x)$, is the length of the shortest computer program that prints $x$ and halts \cite{grunwald2004shannon}.
This computer program can be written in any universal language, the complexity will change based on the universal language by a constant factor not depending on $x$.
We can equally define a conditional version of Kolmogorov complexity, given an input string $y$ as the shortest program that generates $x$ from $y$ and halts --- $K(x|y)$.  
We can then think of the Kolmogorov complexity of a function, for a given input $x$, as the shortest program that generates the output $f(x)$ up to a certain precision.
The definition of the Kolmogorov complexity of a probability distribution follows.

\citet{janzing2010causal} propose using the Kolmogorov complexity of factorisations of the joint to infer causality. 
Given a causal graph $\nomodelxcausey$, they formalised the assumption of \textit{Independent Causal Mechanisms} (ICM) in terms of Kolmogorov complexity by stating that the \textit{algorithmic mutual information} of the causal factorisation is zero,
\begin{align}
    I(P_X : P_{Y|X} ) &\stackrel{+}{=} 0 \\
    \implies K(P_X, P_{Y|X}) &\stackrel{+}{=} K(P_X )+ K(P_{Y|X})     \label{eq:algorithmic_markov_1} \\
    \implies K(P_X) + K(P_{Y|X}) &\stackrel{+}{\leq} K(P_Y) + K( P_{X|Y}) \label{eq:algorithmic_markov_2}\,,
\end{align}
where $I(\cdot: \cdot)$ is the algorithmic mutual information \cite{grunwald2004shannon}, \cref{eq:algorithmic_markov_1} follows by definition and \cref{eq:algorithmic_markov_2} follows from the fact that the Kolmogorov complexity of the anti-causal factorisation cannot be less than that of the joint.
The addition symbol above the inequality relations symbolises the fact they only hold up to an additive constant.
\Cref{eq:algorithmic_markov_2} suggests that causality can be inferred by finding the factorisation with the lowest Kolmogorov complexity.
However, in addition to the fact that \cref{eq:algorithmic_markov_2} requires access to the actual distributions, and that the relation only holds up to unknown additive constants, the Kolmogorov complexity is also uncomputable \cite{grunwald2004shannon}. 
\Cref{eq:algorithmic_markov_2} has thus been used informally to try and infer causality from data \cite{goudet2018learning, mitrovic2018causal, duong2021bivariate, stegle2010probabilistic}.
These methods only use \cref{eq:algorithmic_markov_2} as a philosophical foundation, and \cref{eq:algorithmic_markov_2} does not necessarily provide guarantees that their method will return the correct causal direction.

\subsection{Minimum Description Length relaxations}

Recently there have been attempts to find an analogous inequality to \cref{eq:algorithmic_markov_2} using the \textit{Minimum Description length} (MDL)  principle \cite{grunwald2007minimum}.
First, MDL allows for reasoning about finite data that has to be used to estimate the relevant probability distributions.
\citet{marx2022formally} use relations between Shannon entropy and Kolmogorov complexity to find a formulation in terms of the Kolmogorov complexity of the model and data given the probability distribution \cite{marx2022formally}
\begin{align}
    K_{\nomodelxcausey} := K(P_X) + K(x|P_X) + K(P_{Y|X}) + K(y|x, P_{Y|X}).
    \label{eq:finite_kolm_complex}
\end{align}
In expectation, above equation will equal the left hand side of the inequality \cref{eq:algorithmic_markov_2}, 
\begin{align}
    \mathbb{E}_{P(x,y)}[K_{\nomodelxcausey}] = K(P_X) + K(P_{Y|X}).
\end{align}
Hence, the inequality in \cref{eq:algorithmic_markov_2} only holds in expectation for finite data.
Second, MDL restricts the definition of Kolmogorov complexity from the set of all programs to only those that can be computed, usually specified by a model class.
\citet{marx2022formally} further make the assumption of a model class, $\model$, and assume that the data is generated from that model class.
In this case \cref{eq:finite_kolm_complex} can be written as
\begin{align}
    L_{\xcausey} := L(\model_X) + L(\bfx| \model_X) + L(\model_{Y|X}) + L(\bfy| \bfx, \model_{Y|X}),
\end{align}
where $L$ is some encoding scheme.
As such, the above approach can be considered as balancing the fit of the model (encoding of data given the model) and the complexity of the model class (encoding of the model).
However, here the exact performance will depend on the encoding scheme used.

The Bayesian approach we have considered can be seen as a variant of the MDL principle.
Here, the data given the model and model are note encoded separately.
Specifically, due to the fact that the marginal likelihood has to normalise over datasets, it has an in built complexity penalty. 
It thus also balances model fit along with a complexity penalty.
To see this clearly, consider a model $\model$ with parameter $\rho$ and prior $p(\rho| \model)$, we can write the marginal likelihood as
\begin{align}
    p(\bfx| \model) = \mathbb{E}_{p(\rho|\bfx,\model)} [p(\bfx| \rho, \model)]- \text{KL}[p(\rho|\bfx, \model) \| p(\rho| \model)],
\end{align}
where the first term is the model fit (expectation of the likelihood under the posterior), and the second term is the complexity penalty (distance from the posterior to the prior).
Our approach can thus also be justified by using MDL arguments, though the MDL view does not provide insight into why the Bayesian approach works, nor into the consequences of the choice of priors and models. 
The choice of the prior is subjective and equivalent to choosing a normalised luckiness function in refined MDL \cite{grunwald2007minimum}.

\section{Proofs}
\label{app:sec:proofs}

We re-state the propositions and provide their proofs in this section.
We being by proving a lemma that will help us prove the propositions.
\begin{lemma}
    Given two Bayesian causal models $(\xcausey, \pi_{\nomodelxcausey})$ and $(\ycausex, \pi_{\nomodelycausex})$, they are Bayesian distribution equivalent precisely when
    \begin{align}
        \delta_{\mathcal{C}_{\nomodelxcausey}}\# \pi_{\nomodelxcausey} = \delta_{\mathcal{C}_{\nomodelycausex}}\# \pi_{\nomodelycausex}.
    \end{align}
\label{lemma:app:savage}
\end{lemma}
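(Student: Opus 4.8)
The plan is to recognise the stated identity as a change-of-variables statement combined with the uniqueness of the de Finetti mixing measure. First I would rewrite the marginal likelihood so that it depends on the prior $\pi$ only through its pushforward under $\delta_{\mathcal{C}}$. By \cref{def:marg_like} together with the iid assumption on the dataset, for every $N$ and every $(\mathbf{x}^N, \mathbf{y}^N)$,
\begin{align}
p(\mathbf{x}^N, \mathbf{y}^N \mid \mathcal{M}) &= \int_{\mathcal{C}} \prod_{i=1}^N \delta_{\mathcal{C}}(P')(x_i, y_i) \, \pi(\calcd P' \mid \mathcal{M}) \nonumber \\
&= \int_{\mathcal{P}(\mathcal{X}\times\mathcal{Y})} \prod_{i=1}^N Q(x_i, y_i) \, (\delta_{\mathcal{C}}\#\pi)(\calcd Q), \nonumber
\end{align}
where $Q$ ranges over joint distributions and, abusing notation, $Q(x_i,y_i)$ denotes its density with respect to the reference measure. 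The second equality is just the definition of the image (pushforward) measure. This exhibits the whole data distribution as a mixture of iid product measures directed by the random measure whose law is $\delta_{\mathcal{C}}\#\pi$.

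The forward (``if'') direction is then immediate: if $\delta_{\mathcal{C}_{\nomodelxcausey}}\#\pi_{\nomodelxcausey} = \delta_{\mathcal{C}_{\nomodelycausex}}\#\pi_{\nomodelycausex}$ as measures on $\mathcal{P}(\mathcal{X}\times\mathcal{Y})$, then substituting the common pushforward into the displayed integral yields the same value of $p(\mathbf{x}^N, \mathbf{y}^N \mid \cdot)$ for both BCMs, for every $N$ and every dataset, which is exactly Bayesian distribution-equivalence.

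For the converse (``only if'') direction, I would appeal to the uniqueness of the mixing measure in the de Finetti representation. By the display above, the data distribution $P(\cdot \mid \mathcal{M})$ is an exchangeable probability measure on the sequence space $(\mathcal{X}\times\mathcal{Y})^{\mathbf{N}}$, and its directing (mixing) measure --- a law on $\mathcal{P}(\mathcal{X}\times\mathcal{Y})$ --- is precisely $\delta_{\mathcal{C}}\#\pi$. Assuming $\mathcal{X}\times\mathcal{Y}$ is standard Borel, so that the Hewitt--Savage/de Finetti theorem and its uniqueness clause apply, the exchangeable measure determines its mixing measure uniquely. Hence if the two data distributions coincide, so do their mixing measures, giving $\delta_{\mathcal{C}_{\nomodelxcausey}}\#\pi_{\nomodelxcausey} = \delta_{\mathcal{C}_{\nomodelycausex}}\#\pi_{\nomodelycausex}$.

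The main obstacle is exactly this converse step: it rests on the uniqueness half of de Finetti's theorem, so I would need the observation space to be well-behaved (standard Borel) and to verify carefully that $\delta_{\mathcal{C}}\#\pi$ really is the directing measure of the constructed exchangeable process. A lower-tech alternative, if one prefers to avoid invoking de Finetti directly, is to note that equality of $p(\mathbf{x}^N,\mathbf{y}^N\mid\cdot)$ for all $N$ pins down all mixed moments of the form $\int \prod_i Q(A_i)\,(\delta_{\mathcal{C}}\#\pi)(\calcd Q)$ over product sets, and that such moments determine a Borel probability measure on $\mathcal{P}(\mathcal{X}\times\mathcal{Y})$; since this moment-determinacy argument is essentially the uniqueness content of de Finetti, the direct appeal is cleaner.
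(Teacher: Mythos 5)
Your proof is correct and follows essentially the same route as the paper: the paper also writes each data distribution as a mixture of iid product measures directed by $\delta_{\mathcal{C}}\#\pi$ and invokes the Hewitt--Savage representation theorem, whose uniqueness clause gives the ``only if'' direction. Your version merely spells out the change-of-variables step and the standard Borel caveat more explicitly than the paper does.
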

\begin{proof}
To prove this, we make use of the Hewitt-Savage representation theorem in \citet[thm. 9.4]{hewitt1955symmetric}.
From this, it holds that for each model, the data distribution is uniquely expressible in the form
\begin{align}
    P(\cdot \mid \xcausey) &= \int_{\mathcal{P}(\mathcal{X} \times \mathcal{Y})} \mu_{\nomodelxcausey}(\calcd Q) \cdot Q^{\otimes \infty}\,, \\
    P(\cdot \mid \ycausex) &= \int_{\mathcal{P}(\mathcal{X} \times \mathcal{Y})} \mu_{\nomodelycausex}(\calcd Q) \cdot Q^{\otimes \infty},
\end{align}
where $\mu_{\nomodelxcausey}$ and $\mu_{\nomodelycausex}$ are probability measures over the space $\mathcal{P}(\mathcal{X} \times \mathcal{Y})$, that is probability measures over the space of probability measures over $\mathcal{X} \times \mathcal{Y}$.
In particular $P(\cdot \mid \xcausey) =  P(\cdot \mid \ycausex)$ precisely when $\mu_{\nomodelxcausey} = \mu_{\nomodelycausex}$ \cite{hewitt1955symmetric}.
By construction we can take $\mu_{\nomodelxcausey} = \delta_{\mathcal{C}_{\nomodelxcausey}}\# \pi_{\nomodelxcausey}$ and $\mu_{\nomodelycausex} = \delta_{\mathcal{C}_{\nomodelycausex}}\# \pi_{\nomodelycausex}$.
The result follows.
\end{proof}

In words, the substance of this result is that in order to check the property of Bayesian distribution-equivalence, it suffices to compare the marginal distribution of a single bivariate observation under each model. 
Note that this result is agnostic to the model parametrisation and its identifiability, as it concerns only marginal data distributions.

\begin{proposition}
Given two BCMs $(\xcausey, \pi_{\nomodelxcausey})$, $(\ycausex, \pi_{\nomodelycausex})$, suppose that there exists a subset $\mathcal{C}_{\Delta} \subset \mathcal{C}_{\nomodelxcausey}$ such that $\pi_{\nomodelxcausey}(\mathcal{C}_{\Delta}) > 0$, and $\delta_{\mathcal{C}_{\nomodelxcausey}}(\mathcal{C}_{\Delta}) \cap \mathcal{F}_{\nomodelycausex}$ is empty.
Then the two Bayesian causal models are not Bayesian distribution-equivalent.
\end{proposition}
\begin{proof}
Let $\mathcal{F}_{\Delta} := \delta_{\mathcal{C}_{\nomodelxcausey}}(\mathcal{C}_{\Delta})$ noting that $\mathcal{F}_{\Delta} \in \mathcal{F}_{\nomodelxcausey}$ but $\mathcal{F}_{\Delta} \notin \mathcal{F}_{\nomodelycausex}$.
There is no choice of prior $\pi_{\nomodelycausex}$ such that $
(\delta_{\mathcal{C}_{\nomodelycausex}}\# \pi_{\nomodelycausex})(\mathcal{F}_{\Delta}) = 
(\delta_{\mathcal{C}_{\nomodelxcausey}}\# \pi_{\nomodelxcausey})(\mathcal{F}_{\Delta})$.
The result immediately follows from \cref{lemma:app:savage}.
\end{proof}

Note that the same result as above holds if we assume that $\delta_{\mathcal{C}_{\nomodelxcausey}}(\mathcal{C}_{\Delta}) \cap \mathcal{F}_{\nomodelycausex}$ is non-empty, but $\mathcal{C}_{\Delta}$ no mass under $\pi_{\nomodelycausex}$. In this respect, the effect of placing hard restrictions on the set of conditionals can be mimicked by suitable prior design.

For distribution-equivalent models, we show that the ICM assumption must also hold in the anti-causal factorisation for the models to be Bayesian distribution-equivalent.
We formalise the reversibility of the ICM assumption by introducing separable-compatibility. 
\begin{definition}
Let $(\xcausey, \pi_{\nomodelxcausey})$, $(\ycausex, \pi_{\nomodelycausex})$ be two Bayesian causal models where the underlying causal models are distribution-equivalent, denoting $\gamma$ as the corresponding translation mapping $\gamma: \mathcal{C}_{\nomodelxcausey} \to \mathcal{C}_{\nomodelycausex}$ (in \cref{def:distr_equiv}). 
Say the two are \textbf{separable-compatible} if: 1) the pushforward $\gamma\#\pi_{\nomodelxcausey}$ is separable with respect to $\mathcal{C}_{\nomodelycausex}$, 2) $\gamma^{-1}\#\pi_{\nomodelycausex}$ is separable with respect to $\mathcal{C}_{\nomodelxcausey}$.
\end{definition}

\begin{proposition}
Fix two Bayesian causal models $(\xcausey, \pi_{\nomodelxcausey})$, $(\ycausex, \pi_{\nomodelycausex})$, where the underlying causal models are distribution-equivalent. The two Bayesian causal models are Bayesian distribution-equivalent only if they are separable-compatible.
\end{proposition}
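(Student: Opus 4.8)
The plan is to reduce the statement to an equality of priors under the translation map, exploiting the Hewitt--Savage reduction already established in \cref{lemma:app:savage}. I assume the two BCMs are Bayesian distribution-equivalent. By \cref{lemma:app:savage}, this is equivalent to the equality of pushforward measures $\delta_{\mathcal{C}_{\nomodelxcausey}}\#\pi_{\nomodelxcausey} = \delta_{\mathcal{C}_{\nomodelycausex}}\#\pi_{\nomodelycausex}$ on $\mathcal{P}(\mathcal{X}\times\mathcal{Y})$. Since the underlying causal models are distribution-equivalent, \cref{def:distr_equiv} supplies the translation bijection $\gamma$ with $\delta_{\mathcal{C}_{\nomodelxcausey}}(P) = \delta_{\mathcal{C}_{\nomodelycausex}}(\gamma(P))$ for every $P$; as a composition of maps this reads $\delta_{\mathcal{C}_{\nomodelxcausey}} = \delta_{\mathcal{C}_{\nomodelycausex}}\circ\gamma$, which I would substitute into the left-hand side of the pushforward equality.

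Using functoriality of the pushforward, $\delta_{\mathcal{C}_{\nomodelxcausey}}\#\pi_{\nomodelxcausey} = \delta_{\mathcal{C}_{\nomodelycausex}}\#(\gamma\#\pi_{\nomodelxcausey})$, so the equality becomes $\delta_{\mathcal{C}_{\nomodelycausex}}\#(\gamma\#\pi_{\nomodelxcausey}) = \delta_{\mathcal{C}_{\nomodelycausex}}\#\pi_{\nomodelycausex}$. The crucial step is then to cancel $\delta_{\mathcal{C}_{\nomodelycausex}}$ from both sides: because $\delta_{\mathcal{C}_{\nomodelycausex}}$ is injective (as noted after the causal-model definition), its pushforward acts injectively on probability measures over $\mathcal{C}_{\nomodelycausex}$, yielding $\gamma\#\pi_{\nomodelxcausey} = \pi_{\nomodelycausex}$. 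Invoking the standing ICM assumption that $\pi_{\nomodelycausex}$ is separable with respect to $\mathcal{C}_{\nomodelycausex}$ then gives separability of $\gamma\#\pi_{\nomodelxcausey}$, which is condition (i) of separable-compatibility. Applying $\gamma^{-1}\#(\cdot)$ to the same identity gives $\gamma^{-1}\#\pi_{\nomodelycausex} = \pi_{\nomodelxcausey}$, which is separable by assumption, establishing condition (ii); hence the two models are separable-compatible.

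The main obstacle is the cancellation of $\delta_{\mathcal{C}_{\nomodelycausex}}$ at the level of measures: injectivity of $\delta_{\mathcal{C}_{\nomodelycausex}}$ as a point map does not by itself guarantee that the induced pushforward on $\mathcal{P}(\mathcal{C}_{\nomodelycausex})$ is injective. What is needed is that $\delta_{\mathcal{C}_{\nomodelycausex}}$ is bi-measurable, i.e. measurable with measurable image and measurable inverse on that image; then for any measurable $S\subseteq\mathcal{C}_{\nomodelycausex}$ one has $\mu(S) = (\delta_{\mathcal{C}_{\nomodelycausex}}\#\mu)(\delta_{\mathcal{C}_{\nomodelycausex}}(S))$, and equality of the two pushforwards forces $\gamma\#\pi_{\nomodelxcausey}$ and $\pi_{\nomodelycausex}$ to agree on every such $S$. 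I would discharge this by working in standard Borel spaces, where an injective measurable map is automatically bi-measurable onto its (measurable) image by the Lusin--Souslin theorem. This is a measure-theoretic technicality rather than a conceptual difficulty; everything else is a short, symmetric manipulation of pushforwards.
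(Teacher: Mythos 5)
Your proof follows the paper's argument essentially step for step: reduce Bayesian distribution-equivalence to equality of the pushforward measures $\delta_{\mathcal{C}_{\nomodelxcausey}}\#\pi_{\nomodelxcausey} = \delta_{\mathcal{C}_{\nomodelycausex}}\#\pi_{\nomodelycausex}$ via the Hewitt--Savage lemma, substitute $\delta_{\mathcal{C}_{\nomodelxcausey}} = \delta_{\mathcal{C}_{\nomodelycausex}}\circ\gamma$ from distribution-equivalence, and cancel the injective map $\delta_{\mathcal{C}_{\nomodelycausex}}$ to conclude $\gamma\#\pi_{\nomodelxcausey} = \pi_{\nomodelycausex}$, whence separability transfers across $\gamma$. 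Your two refinements --- observing that cancellation at the level of measures needs bi-measurability of $\delta_{\mathcal{C}_{\nomodelycausex}}$ (discharged by Lusin--Souslin in standard Borel spaces) rather than bare injectivity, and explicitly verifying condition (ii) via $\gamma^{-1}\#\pi_{\nomodelycausex} = \pi_{\nomodelxcausey}$ --- tighten steps the paper's proof leaves implicit, but the route is the same.
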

\begin{proof}
Assume that $(\xcausey, \pi_{\nomodelxcausey})$ and $(\ycausex, \pi_{\nomodelycausex})$ are Bayesian distribution-equivalent.
Take $\mu_{\nomodelxcausey} = \delta_{\mathcal{C}_{\nomodelxcausey}}\# \pi_{\nomodelxcausey}$ and $\mu_{\nomodelycausex} = \delta_{\mathcal{C}_{\nomodelycausex}}\# \pi_{\nomodelycausex}$.
By \cref{lemma:app:savage} $\mu_{\nomodelxcausey} = \mu_{\nomodelycausex}$.
Recalling that $\xcausey$ and $\ycausex$ are distribution-equivalent, the translation mapping $\gamma$ satisfies $\delta_{\mathcal{C}_{\nomodelxcausey}} = \delta_{\mathcal{C}_{\nomodelycausex}} \circ \gamma$, also that the mapping $\delta_{\mathcal{C}_{\nomodelycausex}}$ is injective.
We can argue 
\begin{align}
    \delta_{\mathcal{C}_{\nomodelxcausey}}\# \pi_{\nomodelxcausey} &= \delta_{\mathcal{C}_{\nomodelycausex}}\# \pi_{\nomodelycausex}, \nonumber \\
    (\delta_{\mathcal{C}_{\nomodelycausex}} \circ \gamma)\# \pi_{\nomodelxcausey} &= \delta_{\mathcal{C}_{\nomodelycausex}}\# \pi_{\nomodelycausex}, \nonumber \\
    \gamma\# \pi_{\nomodelxcausey} &=  \pi_{\nomodelycausex} \nonumber,
\end{align}
that is, the pushforward $\gamma\# \pi_{\nomodelxcausey} $ is equal to $\pi_{\nomodelycausex}$, and hence is separable with respect to $\mathcal{C}_{\nomodelycausex}$.
To conclude, $(\xcausey, \pi_{\nomodelxcausey})$ and $(\ycausex, \pi_{\nomodelycausex})$ are separable-compatible.
\end{proof}

The proof of \cref{cor:parametrise_corrollary} directly follows from the above.
For parametrised models, we can define $\delta$ as the map from the parameter space $\Theta \times \Phi$ to the space of probability measures $\mathcal{P}(\mathcal{X} \times \mathcal{Y})$.
From the above proof, we can see that this will require the parametrisation to be injective, that is, the map $\delta : \Theta \times \Phi \to \mathcal{P}(\mathcal{X} \times \mathcal{Y})$ to be injective.

\section{Analysis of Models}
\label{sec:analysis_of_models}

In \cref{sec:asymmetry_theorem}, we showed that two models being separable-compatible is a necessary condition for the models to be Bayesian distribution-equivalent.
Thus, if the models are not separable-compatible, they will not be Bayesian distribution-equivalent.
In this section, we analyse specific models and argue that Bayesian causal models being separable-compatible is a strong condition that does not hold very often.

\subsection{Unnormalised linear Gaussian model}
\label{sec:non_identifiability_gaussian}

\begin{figure}[h!]
\begin{subfigure}{0.3\textwidth}
    \centering
  \tikz{
 \node[obs] (x) {$X_i$};%
 \node[obs,right=of x,xshift=0.25cm] (y) {$Y_i$};
 \node[latent,left=of x,xshift=-0.15cm,yshift=0.5cm] (a_0) {$a_0$};
  \node[latent,left=of x,xshift=-0.15cm,yshift=-0.5cm] (sigma_0) {$\sigma_0$};
 \node[latent,right=of y,xshift=0.15cm,yshift=0.5cm] (a_1) {$a_1$};
  \node[latent,right=of y,xshift=0.15cm,yshift=-0.5cm] (sigma_1) {$\sigma_1$}; \plate[inner sep=0.3cm, xshift=0cm, yshift=0.12cm] {plate1} {(x) (y)} {$i = 1,\ldots, N$}; 
 \edge {x} {y};
 \edge {a_0} {x};
 \edge {sigma_0} {x};
 \edge {a_1} {y} 
\edge {sigma_1} {y} 
 }
 \caption{}
\end{subfigure}
\hspace{0.5cm}
\begin{subfigure}{0.3\textwidth}
    \centering
  \tikz{
 \node[obs] (x) {$X_i$};%
 \node[obs,right=of x,xshift=0.25cm] (y) {$Y_i$};
 \node[latent,above=of x,xshift=-0.5cm,yshift=0.15cm] (a_0) {$a_0$};
  \node[latent,above=of x,xshift=0.5cm,yshift=0.15cm] (sigma_0) {$\sigma_0$};
 \node[latent,above=of y,xshift=-0.5cm,yshift=0.15cm] (a_1) {$a_1$};
  \node[latent,above=of y,xshift=0.5cm,yshift=0.15cm] (sigma_1) {$\sigma_1$}; \plate[inner sep=0.3cm, xshift=0cm, yshift=0.12cm] {plate1} {(x) (y)} {$i = 1,\ldots, N$}; 
 \edge {y} {x};
 \edge {a_0} {x};
 \edge {sigma_0} {x};
 \edge {a_1} {y} 
\edge {sigma_1} {y} 
 \edge {a_0} {y};
 \edge {sigma_0} {y};
 \edge {a_1} {x} 
\edge {sigma_1} {x} 
 }
 \caption{}
\end{subfigure}
\hspace{-.5cm}
\begin{subfigure}{0.3\textwidth}
    \centering
  \tikz{
 \node[obs] (x) {$X_i$};%
 \node[obs,right=of x,xshift=0.25cm] (y) {$Y_i$};
 \node[latent,left=of x,xshift=-0.15cm,yshift=0.5cm] (a_0) {$a_0^{\prime}$};
  \node[latent,left=of x,xshift=-0.15cm,yshift=-0.5cm] (sigma_0) {$\sigma_0^{\prime}$};
 \node[latent,right=of y,xshift=0.15cm,yshift=0.5cm] (a_1) {$a_1^{\prime}$};
  \node[latent,right=of y,xshift=0.15cm,yshift=-0.5cm] (sigma_1) {$\sigma_1^{\prime}$}; \plate[inner sep=0.3cm, xshift=0cm, yshift=0.12cm] {plate1} {(x) (y)} {$i = 1,\ldots, N$}; 
 \edge {y} {x};
 \edge {a_0} {x};
 \edge {sigma_0} {x};
 \edge {a_1} {y} 
\edge {sigma_1} {y} 
 }
 \caption{}
\end{subfigure}
\caption{
Graphical models for: (a) The linear Gaussian causal model $\xcausey$ in \cref{eq:app:linear_gauss_causalfact}.
(b) The anti-causal factorisation of $\xcausey$ in \cref{eq:app:linear_gauss_anticausalfact}.
(c) The causal model for $\ycausex$, where ICM holds in the factorisation $P(Y)P(X|Y)$.
}
\label{fig:app:lineargaussian}
\end{figure}

Here, we consider the Linear Gaussian model of the form 
\begin{align}
    P(X| a_0, \sigma_0, \xcausey) &= \mathcal{N}(a_0, \sigma_0^2), \nonumber \\
    P(Y|X, a_1, \sigma_1, \xcausey) &= \mathcal{N}(a_1 X, \sigma_1^2).
    \label{eq:app:linear_gauss_causalfact}
\end{align}
We can compute the anti-causal factorisation of this model as
\begin{align}
    P(Y | a_0, \sigma_0, a_1, \sigma_1,\xcausey) &= \mathcal{N}(a_1a_0, \sigma_1^2 + a_1^2 \sigma_0^2), \label{eq:app:indep_anticausal} \\
    P(X |Y, a_0, \sigma_0, a_1, \sigma_1,\xcausey) &= \mathcal{N}\left(\Sigma \left( \frac{a_1}{\sigma_1^2} Y + \frac{a_0}{\sigma^2_0} \right) , \Sigma \right)
    \label{eq:app:linear_gauss_anticausalfact}
\end{align}
with $\Sigma = \frac{\sigma_0^2 \sigma_1^2}{ \sigma_1^2 + \sigma_0^2 a_1^2}$.
We can see that while the causal factorisation factorises as \cref{fig:app:lineargaussian} (a), the anti-causal factorisation, in general, factorises as \cref{fig:app:lineargaussian} (b).
The only way for the linear Gaussian Bayesian causal models to be separable-compatible, is if the priors on the parameters are chosen such that 
\begin{align}
    a_1a_0 &\indep \Sigma \left( \frac{a_1}{\sigma_1^2} Y + \frac{a_0}{\sigma^2_0} \right), \\
    \sigma_1^2 + a_1^2 \sigma_0^2 &\indep \Sigma.
\end{align}
If the above holds, we can reparametrise the anti-causal factorisation to factorise as in \cref{fig:app:lineargaussian} (c).
\citet{geiger2002parameter} show that the \textbf{only} prior that satisfies this property is the normal-Wishart prior (the Wishart prior can be scaled by a real function for the bivariate case, see Appendix of \citet{geiger2002parameter}).
This is known as the BGe model.

\paragraph{BGe Model:} We illustrate how the prior in the BGe model gives the independence required in \cref{eq:app:indep_anticausal}.
We only show this for the mean and defer to the results in \citet{geiger2002parameter} for the complete argument.
In this model, a Normal-Wishart prior is placed directly on the joint Gaussian
\begin{align}
    P\left(\begin{bmatrix}X \\ Y\end{bmatrix} \right) = \mathcal{N}\left( \begin{bmatrix} \mu_{1}\\
\mu_{2} \end{bmatrix}, \begin{bmatrix}W_{11} & W_{12}\\ W_{21} & W_{22}
  \end{bmatrix}^{-1} \right).
\end{align}
A normal-Wishart prior on $\bm{\mu}$ is of the form $P(\bm{\mu})= \mathcal{N}(\bm{\eta}, \gamma \mathbf{W}^{-1})$.
We can factorise the distribution for $X$ and $Y$, for example
\begin{align}
P(X) &= \mathcal{N} \left( \mu_{1}, (W_{11} - W_{12}W_{22}^{-1}W_{21})^{-1}\right), \\
P(Y| X) &= \mathcal{N}(\mu_{2} - W_{21}^{-1}W_{22}X +  W_{21}W_{22}^{-1}\mu_{1}, W_{22}^{-1}).
\end{align}
We show that the random variable $\mu_{1}$ is independent of $\mu_{2} + W_{21}W_{22}^{-1}\mu_{1}$. 
Hence, separability of priors holds. 
To see this, we can factorise the prior 
\begin{align*}
p(\mu_{1}) &=  \mathcal{N}(\eta_{1}, (W_{11} - W_{12}W_{22}^{-1}W_{21})^{-1}),\\
p(\mu_{2}|\mu_{1}) &= \mathcal{N}(\eta_{2} - W_{21}W_{22}^{-1}(\mu_{1} - \eta_{1}), W_{22}^{-1})
\end{align*}
The implied distribution of $\mu'_{2} = \mu_{2} + W_{21}W_{22}^{-1}\mu_{1}$ is thus
\begin{align*}
p_{\mu_{2}| \mu_{1}}(\mu_{2} + W_{21}W_{22}^{-1}\mu_{1}) &= \mathcal{N}(\eta_{2} + W_{21}W_{22}^{-1}\eta_{1}, W_{22}^{-1}),
\end{align*}
which is readily seen to be independent of $\mu_1$.
The same can be shown for the Wishart distribution on $\mathbf{W}$, i.e. that $W_{11} - W_{12}W_{22}^{-1}W_{21}$ is independent of $W_{22}$ \cite{geiger2002parameter}.

Thus for any other choice of prior, if the prior matches the data generating distribution, we expect Bayesian model selection to find the correct causal direction in this case.
However, this requires knowledge of the true variance of the cause and effect in this case.
This matches known results in \citet{loh2014high}.
The above may not be desirable as simple scaling of the data, changing the variance, may alter inference of the causal direction, see discussion in  \citet{reisach2021beware}.
We thus suggest normalising all datasets which will render the method mean and scale invariant.

We can also visually verify that two Bayesian Linear Gaussian models imply different data distributions.
We draw parameters from priors ---  an inverse gamma for the scales.
For ease of exposition, we consider the case where the means are 0.
Of course, the datasets will differ more if we do not include this constraint.
For the same set of drawn parameters, we also find the parameters of the joint for the causal model $\ycausex$.
We then plot the contours of the resulting joint distributions as generated by the two causal models.
This shows the likely joint distributions that a model generates.
 \Cref{fig:nonnormalised-gaussian-samples} shows contours of such Gaussians.
This shows distributions with the same mean and variances for the cause and effect but with  different ground truth causal directions $\nomodelxcausey$(red) and $\nomodelycausex$ (blue). 
 \Cref{fig:nonnormalised-gaussian-samples} (a) shows 1 such joint, and (b) shows 10 such joints.
From these figures, it is clear to see that the causal directions alone imply different joint distributions.

\begin{figure*}[h!]
    \centering
    \begin{subfigure}{0.48\textwidth}
    \resizebox{\linewidth}{!}{
        \includegraphics[width=\textwidth]{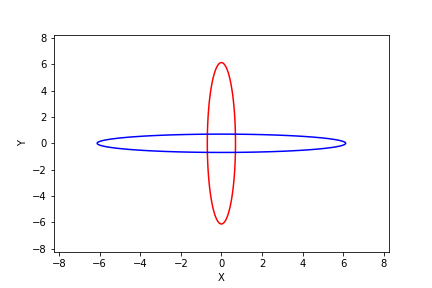}
    }
    \caption{}
    \end{subfigure}
    \begin{subfigure}{0.48\textwidth}
    \resizebox{\linewidth}{!}{
        \includegraphics[width=\textwidth]{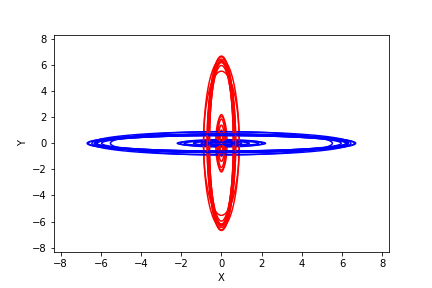}
    }
    \caption{}
    \end{subfigure}
    \caption{Shows samples of joint distributions from the same priors on parameters of the cause and effect, but with different causal models. Red shows the joint of the causal model $\nomodelxcausey$ and blue shows the joint of the causal model $\nomodelycausex$. The contours are plotted for the same draw of parameters, showing that the different causal models will explain different joint distributions well. }
    \label{fig:nonnormalised-gaussian-samples}
\end{figure*}

\subsection{Normalised linear Gaussian model}

As we suggest normalising the dataset in line with \citet{reisach2021beware}, we show here that for this model, if the same prior is used for both causal models (as we recommend for our model choice), then the causal models will be Bayesian distribution-equivalent.

We conduct the same procedure as above, but ensure that the marginal distributions of $X$ and $Y$ are normalised to $\mathcal{N}(0,1)$.
\Cref{fig:normalised-gaussian-samples} (a) shows the contours of one such joint distribution for the two causal models $\nomodelxcausey$ and $\nomodelycausex$, while (b) shows 10 such joints.
We can see that the joints completely overlap for the two causal models, hence the Bayesian model selection will have no opinion on the true causal model, and will convey this uncertainty.

We can show this mathematically by assuming the data samples has been generated as follows
\begin{align}
    \Pi(X| a_0, \sigma_0) &= \mathcal{N}(a_0, \sigma_0^2),\\
    \Pi(Y|X, a_1, \sigma_1) &= \mathcal{N}(a_1 X, \sigma_1^2).
\end{align}
On normalisation, we create two new variables $X'$ and $Y'$ that have a standard normal distribution. Accordingly, we set $X' = \frac{X - a_0}{\sigma_0}$ and $Y' = \frac{Y - a_0a_1}{\sqrt{\sigma_1^2 + \sigma_0^2 a_1^2}}$, with distributions
\begin{align}
    \Pi(X') &= \mathcal{N}(0, 1),\\
    \Pi(Y'|X', a_1, \sigma_0, \sigma_1) &= \mathcal{N} \left( \frac{\sigma_0a_1}{\sqrt{\sigma_1^2 + \sigma_0^2 a_1^2}} X', \frac{\sigma_1^2}{\sigma_1^2 + \sigma_0^2 a_1^2} \right).
\end{align}
We can see that both $\Pi(X')$ and $\Pi(Y')$ are standard normal.
The anti-causal factorisation in this case results in the exact same distribution. Using simple algebra we get
\begin{align}
    \Pi(Y') &= \mathcal{N}(0, 1),\\
    \Pi(X'|Y',a_1, \sigma_0, \sigma_1) &= \mathcal{N} \left( \frac{\sigma_0a_1}{\sqrt{\sigma_1^2 + \sigma_0^2 a_1^2}} Y', \frac{\sigma_1^2}{\sigma_1^2 + \sigma_0^2 a_1^2} \right).
\end{align}
For any prior on the parameters of $\Pi(X') \Pi(Y'\mid X' a_1, \sigma_0, \sigma_1)$, the same prior on $\Pi(Y') \Pi(X'\mid Y' a_1, \sigma_0, \sigma_1)$ will result in the two causal models having the same data distribution.
Hence, there exist priors such that the Bayesian causal models constructed out of these causal models cannot be discriminated by the marginal likelihood.
Note, we can clearly see here that for any choice of priors, the two models are separable-compatible.

\begin{figure*}[h!]
    \centering
    \begin{subfigure}{0.48\textwidth}
    \resizebox{\linewidth}{!}{
        \includegraphics[width=\textwidth]{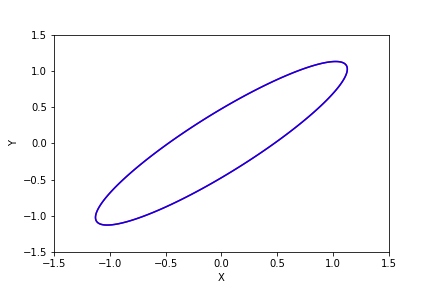}
    }
    \caption{}
    \end{subfigure}
    \begin{subfigure}{0.48\textwidth}
    \resizebox{\linewidth}{!}{
        \includegraphics[width=\textwidth]{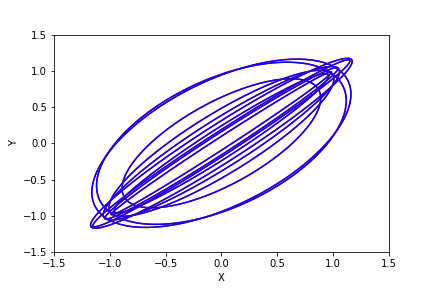}
    }
    \caption{}
    \end{subfigure}
    \caption{Best viewed in colour. Joint contours for normalised cause and effect. The joint contours completely overlap for the two causal models. Hence, both causal models will explain both datasets equally well.}
    \label{fig:normalised-gaussian-samples}
\end{figure*}

\subsection{Gaussian Process Latent variable model}

We refer the reader to \citet{rasmussen2003gaussian} for an introduction to Gaussian processes.
Here, we argue that the GPLVM prior does not satisfy separability with respect to the anti-causal factorisation.
We only condition on the model ($\xcausey$ or $\ycausex$) on the left hand side of equations to avoid clutter of notation.
Note that the final data density (density implied by the data distribution) is found by integrating over the hyperparameters as laid out in \cref{sec:evidence_approx}. 
For clarity, we omit this additional step from this section.

\paragraph{Model $\xcausey$: } 
In $\xcausey$, $\bfx$ is modelled directly, and $\bfy$ is modelled conditional on $\bfx$.
For the estimation of $\bfx$ we have 
\begin{align}
    p(\bfx, \mathbf{f}_X, \mathbf{w}_X| \lambda_X, \xcausey) = p(\bfx|\mathbf{f}_X)p(\mathbf{f}_X| \mathbf{w}_X, \lambda_X) p(\mathbf{w}_X),
\end{align}
where $\lambda_X$ collects all the hyperparameters for modelling $\bfx$.
The data density can be found by integrating over the priors 
\begin{align}
    p(\bfx| \lambda_X, \xcausey) &=\iint p(\bfx|\mathbf{f}_X)p(\mathbf{f}_X| \mathbf{w}_X, \lambda_X) p(\mathbf{w}_X) d\mathbf{f}_X d\mathbf{w}_X \\
    &= \int   p(\bfx|\mathbf{w}_X, \lambda_X) p(\mathbf{w}_X) d\mathbf{w}_X,
\end{align}
where $ p(\bfx| \mathbf{w}_X,\lambda_X) = \mathcal{N}(0, K_{\lambda_X}(\mathbf{w}_X, \mathbf{w}_{X}') + \sigma^2_X)$, $\sigma^2_X$ is the likelihood noise hyperparameter and $K_{\lambda_X}$ is the chosen kernel with hyperparameters $\lambda_X$.
All modelling choices are as laid out in section 5.
$p(\mathbf{w}_X)$ is usually chosen to be standard Gaussian distribution.
From this, we can see that the dataset density is a mixture of Gaussians, mixed by Gaussian-distributed weights.
Similarly, for the conditional model $\bfy|\bfx$ we have
\begin{align}
    p(\bfy| \bfx, \lambda_Y, \xcausey) &=\iint  p(\bfy|\bfx, \mathbf{f}_Y)p(\mathbf{f}_Y| \mathbf{w}_Y, \lambda_Y) p(\mathbf{w}_Y) d\mathbf{f}_Y d\mathbf{w}_Y \\
    &= \int   p(\bfy|\bfx, \mathbf{w}_Y, \lambda_Y) p(\mathbf{w}_Y) d\mathbf{w}_Y,
\end{align}
where $ p(\bfy|\bfx, \mathbf{w}_Y, \lambda_Y) = \mathcal{N}(0, K_{\lambda_Y}((\bfx,\mathbf{w}_Y), (\bfx, \mathbf{w}_{Y})') + \sigma^2_Y)$, and $p(\mathbf{w}_Y)$ is a standard Gaussian.
Hence 
\begin{align}
    p(\bfx| \lambda_X, \xcausey) p(\bfy| \bfx, \lambda_Y, \xcausey) &= \int   p(\bfx|\mathbf{w}_X, \lambda_X) p(\mathbf{w}_X) d\mathbf{w}_X  \int   p(\bfy|\bfx, \mathbf{w}_Y, \lambda_Y) p(\mathbf{w}_Y) d\mathbf{w}_Y.
\end{align}

\paragraph{Model $\ycausex$: } Similarly, for the causal model $\ycausex$, we have 
\begin{align}
    p(\bfy| \lambda_X, \ycausex) p(\bfx| \bfy, \lambda_Y, \ycausex) &= \int   p(\bfy|\mathbf{w}_Y, \lambda_Y) p(\mathbf{w}_Y) d\mathbf{w}_Y  \int   p(\bfx|\bfy, \mathbf{w}_X, \lambda_X) p(\mathbf{w}_X) d\mathbf{w}_X,
    \label{app:eq:gplvm_ycausex_causal}
\end{align}
where 
\begin{align}
    p(\bfy| \mathbf{w}_Y, \lambda_Y, \ycausex) &= \mathcal{N}(0, K_{\lambda_Y}(\mathbf{w}_Y, \mathbf{w}_{Y}') + \sigma^2_Y), \nonumber \\
    p(\bfx|\bfy, \mathbf{w}_X, \lambda_X, \ycausex) &= \mathcal{N}(0, K_{\lambda_X}((\bfy,\mathbf{w}_X), (\bfy, \mathbf{w}_{X})') + \sigma^2_X). \label{app:eq:gplvm_backward_densities}
\end{align}

To compare against the model $\xcausey$, we must find the anti-causal factorisation of $\ycausex$.
First, we can see that if we factorise \cref{app:eq:gplvm_ycausex_causal}, then the priors on the distributions are no longer separable and hence the integrals cannot be estimated separately
\begin{align}
     &\int   p(\bfy|\mathbf{w}_Y, \lambda_Y) p(\mathbf{w}_Y) d\mathbf{w}_Y  \int   p(\bfx|\bfy, \mathbf{w}_X, \lambda_X) p(\mathbf{w}_X) d\mathbf{w}_X \\
     &= \iint   p(\bfx|\mathbf{w}_Y,\mathbf{w}_X, \lambda_X, \lambda_Y)   p(\bfy|\bfx, \mathbf{w}_Y,\mathbf{w}_X, \lambda_X, \lambda_Y)p(\mathbf{w}_Y) p(\mathbf{w}_X) d\mathbf{w}_Y  d\mathbf{w}_X.
\end{align}
Here,
\begin{align}
    p(\bfx| \mathbf{w}_X, \mathbf{w}_Y, \lambda_X, \lambda_Y, \ycausex) &= \int p(\bfy| \mathbf{w}_Y, \lambda_Y)  p(\bfx|\bfy, \mathbf{w}_X, \lambda_X)    d\bfy , \\
    p(\bfy| \bfx,  \mathbf{w}_X, \mathbf{w}_Y, \lambda_X, \lambda_Y, \ycausex) &= \frac{p(\bfy| \mathbf{w}_Y, \lambda_Y)  p(\bfx|\bfy, \mathbf{w}_X, \lambda_X)}{p(\bfx| \mathbf{w}_X, \mathbf{w}_Y, \lambda_X, \lambda_Y)}.
\label{eq:app:anti-causal_factorisation}
\end{align}
As the kernel $K$ is generally complicated non-linear function, these terms are not Gaussian distributed, in contrast with the terms of the causal factorisation of $\xcausey$.

Directly comparing the marginals over $\bfx$ in the two causal models,
\begin{align}
    p(\bfx| \lambda_X, \xcausey) &= \int   p(\bfx|\mathbf{w}_X, \lambda_X) p(\mathbf{w}_X) d\mathbf{w}_X, \\
    p(\bfx| \lambda_X, \lambda_Y, \ycausex) &= \iint \left( \int p(\bfy| \mathbf{w}_Y, \lambda_Y)  p(\bfx|\bfy, \mathbf{w}_X, \lambda_X)    d\bfy \right) p(\mathbf{w}_Y) p(\mathbf{w}_X) d\mathbf{w}_Y  d\mathbf{w}_X  \nonumber,
\end{align}
while the term for $\xcausey$ is a mixture of Gaussians, the term for $\ycausex$ is clearly not a mixture of Gaussians.
The same holds for the conditionals
\begin{align}
    p(\bfy| \bfx, \lambda_Y, \xcausey) &= \int   p(\bfy|\bfx, \mathbf{w}_Y, \lambda_Y) p(\mathbf{w}_Y) d\mathbf{w}_Y, \\
     p(\bfy| \bfx, \lambda_X, \lambda_Y, \ycausex) &= \iint \frac{p(\bfy| \mathbf{w}_Y, \lambda_Y)  p(\bfx|\bfy, \mathbf{w}_X, \lambda_X)}{p(\bfx| \mathbf{w}_X, \mathbf{w}_Y, \lambda_X, \lambda_Y)} p(\mathbf{w}_Y) p(\mathbf{w}_X) d\mathbf{w}_Y  d\mathbf{w}_X  .\nonumber
\end{align}
As there is no analytical form for the terms of the anti-causal factorisation of $\ycausex$, we proceed by explicitly showing that the variance of $p(\bfx| \mathbf{w}_X, \mathbf{w}_Y, \lambda_X, \lambda_Y, \ycausex)$ depends on $\mathbf{w}_X, \mathbf{w}_Y$ for a certain choice of kernel (thus that the term actually depends on $\mathbf{w}_X, \mathbf{w}_Y$). 
We then contend that it is unlikely that the variance of $p(\bfy| \bfx,  \mathbf{w}_X, \mathbf{w}_Y, \lambda_X, \lambda_Y, \ycausex)$ also does not depend on $\mathbf{w}_X, \mathbf{w}_Y$.
If this holds, then the induced prior over $\mathbf{w}_X, \mathbf{w}_Y$  is not separable with respect to the anti-causal factorisation.

\paragraph{Explicit Derivation for RBF kernels.}
From \cref{eq:app:anti-causal_factorisation}, we can see that both terms of the anti-causal factorisation of $\ycausex$ depend on $\mathbf{w}_X$, $\mathbf{w}_Y$ and hence the prior over these terms is not separable.
That is, while the causal factorisation factorises as \cref{fig:ICM_graphical_model}(b), the anti-causal factorisation does not factorise as \cref{fig:ICM_graphical_model}(a).
We can show the explicit dependence on $\mathbf{w}_X$, $\mathbf{w}_Y$ to one of the terms in \cref{eq:app:anti-causal_factorisation} for certain kernel choices --- specifically the ARD RBF kernel.
We can find the variance of 
\begin{align}
    p(\bfx| \mathbf{w}_X, \mathbf{w}_Y, \lambda_X, \lambda_Y, \ycausex) &= \int p(\bfy| \mathbf{w}_Y, \lambda_Y)  p(\bfx|\bfy, \mathbf{w}_X, \lambda_X) d\bfy. 
\end{align}
Here (suppressing the $\lambda$ and model notation for clarity), 
\begin{align}
    \mathbb{E}[X^2 \mid W_X, W_Y] &= \int p(\bfy | \mathbf{w}_Y) \int x^2 p(\bfx | \bfy, \mathbf{w}_X) \calcd \bfx \calcd \bfy \\
    &= \int p(\bfy | \mathbf{w}_Y) \left[\mathbf{k}(\mathbf{y}, \mathbf{y}^{\prime})\mathbf{k}(\mathbf{w}_X, \mathbf{w}_{X}^{\prime}) + \sigma_X^2 \right] \calcd \bfy.
\end{align}
We can calculate the kernel expectation 
\begin{align}
\mathbf{\Psi} := \int p(\bfy | \mathbf{w}_Y) \mathbf{k}(\mathbf{y}, \mathbf{y}^{\prime}) \calcd \bfy,
\end{align}
where $\mathbf{\Psi} \in \mathbb{R}^{N \times N}$. 
Let $\mathbf{k}_{nm}$ be the $2 \times 2$ matrix of the form (covariance of $p(y_n, y_m |w_{Y_n}, w_{Y_m} )$)
\begin{align}
    \mathbf{k}_{nm} = \begin{pmatrix}
        k(w_{Y_n}, w_{Y_n}) & k(w_{Y_n}, w_{Y_m}) \\ k(w_{Y_m}, w_{Y_n}) & k(w_{Y_m}, w_{Y_m})
    \end{pmatrix} = \begin{pmatrix}
        k_{11} & k_{12} \\ k_{21} & k_{22}
    \end{pmatrix}. 
\end{align}
We can write the $nm^{th}$ element of $\mathbf{\Psi}$ as 
\begin{align}
    [\mathbf{\Psi}]_{nm} &= \iint p(y_n, y_m |w_{Y_n}, w_{Y_m} ) k(y_n, y_m) \calcd y_n \calcd y_m \\
    &= \int p(y_m | w_{Y_m}) \int p(y_n | y_m, w_{Y_n}, w_{Y_m} ) k(y_n, y_m) \calcd y_m \calcd y_n,
\end{align}
where
\begin{align}
    p(y_m | w_{Y_m}) &= \mathcal{N}(0, k_{22}) \\
    p(y_n | y_m, w_{Y_n}, w_{Y_m} ), &= \mathcal{N}(k_{12}k_{22}^{-1}y_m, k_{11} - k_{12}k_{22}^{-1}k_{21}).
\end{align}
We let $\mu = k_{12}k_{22}^{-1}$ and $\Sigma = k_{11} - k_{12}k_{22}^{-1}k_{21}$ in the following.
We can write the terms inside the exponential of $p(y_n | y_m, w_{Y_n}, w_{Y_m} ) k(y_n, y_m)$ as (ignoring the lengthscale of the RBF kernel)
\begin{align}
    &\propto \exp \left(  -\frac{1}{2} \left( (y_n - \mu y_m)^2 \Sigma^{-1} + (y_n - y_m)^2  \right) \right) \\
    &=\exp\left(- \frac{1}{2} c\right)\exp \left(  -\frac{1}{2} \left( (y_n - \nu)^2 \eta^{-1} \right) \right),
\end{align}
where $\eta^{-1} = (\Sigma^{-1} + 1)$, $\nu = \eta (\mu y_m \Sigma^{-1} + y_m)$, and $c = \eta (y_m^2 \mu^2 \Sigma^{-1} + y_m^2 \Sigma^{-1} - 2\mu y_m^2 \Sigma^{-1})$.
This gives that
\begin{align}
    [\mathbf{\Psi}]_{nm} = \frac{\sqrt{\eta}}{| \mathbf{k}_{nm} |^{0.5}} \int \exp\left(- \frac{1}{2} c\right) p(y_m | w_{Y_m}) \calcd y_m. 
\end{align}
We again expand the terms inside the exponential of the integrand,
\begin{align}
    \propto \exp \left( - \frac{1}{2} y^2_m ( k_{22}^{-1 } + \eta \mu^2 \Sigma^{-1} + \eta \Sigma^{-1} - 2 \mu \eta \Sigma^{-1}) \right).
\end{align}
Integrating this gives
\begin{align}
    [\mathbf{\Psi}]_{nm} = \frac{\sqrt{ 2 \eta}}{| \mathbf{k}_{nm} |^{0.5} \sqrt{k_{22}^{-1 } + \eta \mu^2 \Sigma^{-1} + \eta \Sigma^{-1} - 2 \mu \eta \Sigma^{-1})}}.
\end{align}
Hence, we can explicitly see that the variance of $p(\bfx| \mathbf{w}_X, \mathbf{w}_Y, \lambda_X, \lambda_Y, \ycausex)$ depends on both $\mathbf{w}_Y$ and $\mathbf{w}_X$.
The variance of $p(\bfy| \bfx,  \mathbf{w}_X, \mathbf{w}_Y, \lambda_X, \lambda_Y, \ycausex)$ is much harder to calculate.
We view it as highly unlikely that it would marginally depend on neither of $\mathbf{w}_Y$ and $\mathbf{w}_X$.
If it depends on either $\mathbf{w}_Y$ or $\mathbf{w}_X$, then the anti-causal factorisation shares parameters, and the prior is thus unlikely to be separable in the anti-causal direction.

\section{Derivations for section 4.3} 
\label{sec:derivation_poe}

\subsection{Derivation of the probability of error of both models being the same under similar priors}
Here we show that the probability of error of both the models is exactly the same.
Assuming a parametrisation and denoting parameters as $\phi$ for $X$ and $\theta$ for $Y$, we get
\begin{align}
    p(\bfx, \bfy|\xcausey) &= \int p(\bfx | \phi, \xcausey) \pi(\phi| \xcausey) \calcd\phi  \int p(\bfy|\bfx, \theta, \xcausey) \pi(\theta| \xcausey) \calcd\theta \,, 
    \label{eq:bayes_causal_model_causal_app}\\
    p(\bfx, \bfy|\ycausex) &= \int p(\bfy | \theta, \ycausex) \pi(\theta| \ycausex)\calcd\phi \int p(\bfx|\bfy, \phi, \ycausex) \pi(\phi| \ycausex) \calcd \theta  \,. \label{eq:bayes_causal_model_anti-causal_app}
\end{align}
The marginal and conditional densities in the above are chosen from the same families.
If we assume similar priors on the cause and effect in both models, then $\pi(\phi| \xcausey) = \pi(\theta| \ycausex)$ and $\pi(\theta| \xcausey)=\pi(\phi| \ycausex)$.
Thus, $p(\bfx, \bfy|\xcausey) = p(\bfy, \bfx|\ycausex)$.
That is, swapping $\bfx$ and $\bfy$ in one model will give the dataset density of the other model.
We can use this to show 
\begin{align}
    P(E| \xcausey) &= \int_{\mathcal{R}} p(\bfx, \bfy| \xcausey ) \calcd (\bfx, \bfy) \,, && \mathcal{R} = \{(\bfx, \bfy) \suchthat p(\bfx, \bfy|\ycausex) > p(\bfx, \bfy|\xcausey)\} \\
    &= \int_{\mathcal{R}} p(\bfy, \bfx| \ycausex ) \calcd (\bfx, \bfy) \,, && \mathcal{R} = \{(\bfx, \bfy) \suchthat p(\bfy, \bfx|\xcausey) > p(\bfy, \bfx|\ycausex)\} \\
    &=  P(E| \ycausex). 
\end{align}

\subsection{Upper bound on standard deviation of probability of error}
The probability of error is
\begin{align}
    P(\text{Error} )  &= P(\text{Error}| \xcausey) \\
    &= \int \mathbb{I}[\data \in \mathcal{R}]  \cdot p(\data| \xcausey )  \calcd \data \\
    \hat{I} &=  \frac{1}{T} \sum_{t=1}^T \mathbb{I}[\data_t \in \mathcal{R}], \ \ \data_t \sim p(\data| \xcausey ),
\end{align}
where $\mathbb{I}[\cdot]$ is the indicator function and $\hat{I}$ is the Monte Carlo estimator of $ P(\text{Error} ) $.
We can bound the variance by using the fact that $\text{Var}(\mathbb{I}[\data_t \in \mathcal{R}]) \leq 0.25$, upon viewing $\mathbb{I}[\cdot]$ as a Bernoulli random variable.
We hence obtain that
\begin{align}
    \text{Var}(\hat{I}) &\leq \frac{1}{4T}.
\end{align}
This can be calculated numerically.

\subsection{Derivation of total probability of error}
\label{sec:app:total_prob_error}

The total probability of error is (see \citet{nielsen2014generalized} for more details)
\begin{align}
    P(E) &= P(E | \xcausey)P(\xcausey) + P(E|\ycausex)P(\ycausex) \\
    &= \int_{\mathcal{R}_Y} p(\data| \xcausey)p(\xcausey) \calcd \data + \int_{\mathcal{R}_X} p(\data| \ycausex)p(\ycausex) \calcd \data \\
    &= \int \min(p(\data| \xcausey)p(\xcausey), p(\data| \ycausex)p(\ycausex)) \calcd \data,
\end{align}
where $\mathcal{R}_Y = \{\data \suchthat p(\data|\ycausex) > p(\data|\xcausey)\}$, and $\mathcal{R}_X = \{\data \suchthat p(\data|\ycausex) < p(\data|\xcausey)\}$.
We now use $\min(a,b) = \frac{a + b - | b-a |}{2}$ to obtain
\begin{align}
    P(E) &= \frac{1}{2} - \frac{1}{2} \int |  p(\data|\xcausey) - p(\data|\ycausex)  | \calcd \data \\
    &=  \frac{1}{2} (1-  \text{TV}[P(\cdot| \xcausey ),  P(\cdot| \ycausex )]).
\end{align}

\subsection{Misspecification}
\label{sec:app:prob_error_misspecify}
We can write
\begin{align}
    | Q(E) - P(E) | &= \frac{1}{2} \left| \text{TV}(P(\cdot| \xcausey ),  P(\cdot| \ycausex )) - \text{TV}(Q(\cdot| \xcausey ),  Q(\cdot| \ycausex )  \right| \\
    &= \frac{1}{2} \left| \int \left( \left| p(\data| \xcausey ) -   p(\data| \ycausex )  \right| - \left| q(\data| \xcausey ) -   q(\data| \ycausex )  \right| \right) \calcd \data  \right| \\
    &\leq \frac{1}{2} \left| \int \left( \left| p(\data| \xcausey ) -  q(\data| \xcausey ) + q(\data| \ycausex ) -  p(\data| \ycausex )   \right| \right) \calcd \data  \right| \\
    &\leq \frac{1}{2} \left| \int \left( \left| p(\data| \xcausey ) -  q(\data| \xcausey ) \right| + \left| q(\data| \ycausex ) -  p(\data| \ycausex )   \right| \right) \calcd \data  \right| \\
    &= \frac{1}{2} \left|  \text{TV}(P(\cdot| \xcausey ),  Q(\cdot| \xcausey )) + \text{TV}(P(\cdot| \ycausex ),  Q(\cdot| \ycausex ))  \right|.
\end{align}
Where we use the reverse triangle inequality and the triangle inequality.
We can get rid of the absolute value by noticing that TV is bounded by $0$ and $1$.

\section{Model Details}
\label{app:gplvm_model_details}

Here, we provide a more in depth introduction to our model and approximations.

\subsection{Latent variable Gaussian Processes with inducing points}
\label{sec:inducing_point_gplvm}

Gaussian processes (GPs) \cite{rasmussen2003gaussian} are non-parametric Bayesian models that define a prior over functions.
The form of the prior is controlled by  choice of a kernel function, $\mathbf{K}$.
Specifically, the kernel defines a covariance over outputs for the function.
The kernels are parametrised by continuous hyperparameters.
Adding kernels together and varying their hyperparameters allows for construction of flexible priors which support a wide range of functions.

Latent variable Gaussian Processes (GPLVM) consider a latent noise term $\mathbf{w}$ as an input with an associated prior.
Integrating over the noise term allows for modelling heteroscedastic noise as well as non-Gaussian likelihoods.
GPs have a well-known computational cost of $\mathcal{O}(N^3)$ where $N$ is the number of samples, which prohibits their direct application to large datasets.
To allow for scalability, we use an inducing point approximation \cite{titsias2009variational, hensman2013gaussian} to the posterior.
Here, we approximate the inputs $\mathbf{x}$ and latents $\mathbf{w}$ with $M<N$ `inducing' inputs $\mathbf{z}$, and their corresponding outputs with $\mathbf{u}$.
This formulation now has a cost of $\mathcal{O}(M^3)$.

We collectively denote all hyperparameters of the model with $\bm{\lambda}$.
The latent variable Gaussian process for modelling $p(\mathbf{y}|\mathbf{x}, \bm{\lambda})$ has the following form
\begin{align}
    &\mathbf{y}|\mathbf{x}, \mathbf{w} \sim \mathcal{N}(\mathbf{f}(\mathbf{x}, \mathbf{w}), \sigma^2), \ \ \ \ \mathbf{F}|\mathbf{u} \sim \mathcal{N}(\mathbf{K}_{\mathbf{fu}}\mathbf{K}_{\mathbf{uu}}^{-1}\mathbf{u}, \mathbf{\Sigma}), \nonumber \\
    & \mathbf{u} \sim \mathcal{N}(\mathbf{0}, \mathbf{K}_{\mathbf{uu}}),  \ \ \ \ \  \ \ \  \ \ \ \ \ \ \ \ \ \ \ [\mathbf{K}_{\mathbf{ff}}]_{nn'} = \mathbf{K}(x_n, x_{n'}), \nonumber
\end{align}
where $\Sigma = \mathbf{K}_{\mathbf{ff}} -\mathbf{K}_{\mathbf{fu}} \mathbf{K}_{\mathbf{uu}}^{-1}\mathbf{K}_{\mathbf{uf}}$, and $\sigma$ denotes the likelihood noise hyperparameter.
We posit an approximate posterior over the latents $q(\mathbf{w})$ and inducing outputs $q(\mathbf{u})$ following the variational inference framework.
This yields a lower bound to the log marginal likelihood that can then be maximised with respect to the variational distributions $q$ and the inducing inputs $\mathbf{z}$.
The lower bound for the conditional $p(\mathbf{y}| \mathbf{x}, \bm{\lambda})$ has the form
\begin{align}
     \mathcal{L}_{\mathbf{y}| \mathbf{x}}(q, \mathbf{z}, \bm{\lambda}) \!\! := \!\! \sum_n \mathbb{E}_{q(w_n)q(\mathbf{u})p(\mathbf{f}|\mathbf{u})}\left[ \log p(y_n| x_n, w_n) \right] 
    \! + \! KL[q(\mathbf{w})|| p(\mathbf{w})] \! +\! KL[q(\mathbf{u})|| p(\mathbf{u})].
    \label{eq:lower_bound_conditional}
\end{align}
For smaller datasets, we follow the steps in \citet{titsias2010bayesian} and analytically integrate over $p(\mathbf{f}|\mathbf{u})$ first and then find the closed form solution for $q(\mathbf{u})$ --- denoted GPLVM-closed form.
For certain kernels, for example the RBF and linear kernels, we can analytically find the expectation under $q(w_n)$ of the remaining first term in  \cref{eq:lower_bound_conditional}.
For larger datasets, we need to use stochastic variational inference \cite{hensman2013gaussian} as finding the closed form solutions are prohibitive --- we denote this GPLVM-stochastic.
This requires using doubly stochastic variational inference to calculate the expectations \cite{lalchand2022generalised}.
We assume a standard Gaussian distribution for $p(\mathbf{w})$ and a Gaussian for $q(\mathbf{w})$, with variational parameters to be trained.
The final KL term is between two Gaussians and is analytically tractable.
We use the evidence approximation for the hyperparameters, which we detail in  \cref{sec:evidence_approx}. 

The bound for all the marginal and conditional models, $p(\mathbf{y}| \mathbf{x})$, $p( \mathbf{x})$, $p(\mathbf{x}| \mathbf{y})$, and $p(\mathbf{y})$, follows the form of  \cref{eq:lower_bound_conditional}.
We assume the same kernels and priors for all the models as well.
    
\subsection{Final Score}

To model each distribution, we maximise the lower bound with respect to the variational distributions $q$ and inducing inputs $\mathbf{z}$ to tighten the bound.
Simultaneously, we maximise the lower bound with respect to the kernel hyperparameters and the likelihood noise, collectively denoted as $\bm{\lambda}$ (see \cref{sec:evidence_approx}).
The final scores we calculate are
\begin{align}
    \mathcal{F}_{X \rightarrow Y} &= \mathcal{L}_{\mathbf{x}}(\hat{q}, \hat{\mathbf{z}}, \hat{\bm{\lambda}}) + \mathcal{L}_{\mathbf{y}|\mathbf{x}}(\hat{q}, \hat{\mathbf{z}}, \hat{\bm{\lambda}}), \\ 
    \mathcal{F}_{Y \rightarrow X} &= \mathcal{L}_{\mathbf{y}}(\hat{q}, \hat{\mathbf{z}}, \hat{\bm{\lambda}}) + \mathcal{L}_{\mathbf{x}|\mathbf{y}}(\hat{q}, \hat{\mathbf{z}}, \hat{\bm{\lambda}}),
\end{align}
where $(\hat{q}, \hat{\mathbf{z}}, \hat{\bm{\lambda}})$ denote the values the maximise the corresponding lower bound.
We finally infer the predicted causal model as $\mathcal{M}_{X \rightarrow Y}$ if $\mathcal{F}_{X \rightarrow Y} > \mathcal{F}_{Y \rightarrow X}$, $\mathcal{M}_{Y \rightarrow X}$ if $\mathcal{F}_{Y \rightarrow X} > \mathcal{F}_{X \rightarrow Y}$, and undecided otherwise.

\section{Justifying MAP estimation of Hyperparameters}
\label{sec:evidence_approx}

In this section, we give details on the approximation we consider to integrate over the prior over hyperparameters.
This is standard practice in Gaussian process training \cite{titsias2009variational, damianou2013deep, dutordoir2018gaussian, rasmussen2003gaussian}.

We need to integrate out all model hyperparameters to get an accurate value of the marginal likelihood.
Otherwise, the actual quantity being compared is the posterior of the model with a specific hyperparameter setting.
Due to non-linearity of kernels, the integral over priors over hyperparameters tend to be intractable for our method, the GPLVM.
As such, we use the Laplace Approximation to approximate these integrals  \cite{mackay1999comparison}.
Taking the conditional $p(\mathbf{y}| \mathbf{x})$ as an example (leaving out terms in the conditional for simplicity), we wish to calculate 
\begin{align}
    p(\mathbf{y}| \mathbf{x}) = \int p(\mathbf{y}| \mathbf{x}, \lambda) p(\bm{\lambda}) d\bm{\lambda},
    \label{eq:evidence_integral}
\end{align}
where $p(\bm{\lambda})$ is a prior over the hyperparameters.
The justification for this approximation is that integral in  \cref{eq:evidence_integral} is simply the normalising constant of the posterior over the hyperparameters.
This posterior tends to be highly peaked, and even more so as the number of datapoints increases and the number of hyperparameters are few \cite{rasmussen2003gaussian}.
Thus, as most of the volume is around the MAP solution of the posterior, we can assume a Gaussian distribution around this point and approximate the integral \cref{eq:evidence_integral} as the normalising constant of this Gaussian.
The approximation of \cref{eq:evidence_integral} is 
\begin{align}
    \log p(\mathbf{y}| \mathbf{x}) &\approx \log p(\mathbf{y}| \mathbf{x}, \hat{\bm{\lambda}})p(\hat{\bm{\lambda}}) \left| \frac{1}{2 \pi} \mathbf{A}  \right|^{-\frac{1}{2}},  \\
    \hat{\bm{\lambda}} &= \argmax_{\bm{\lambda}} p(\mathbf{y}| \mathbf{x}, \bm{\lambda}), \\
    \mathbf{A} &= - \nabla^2_{\bm{\lambda}} \log p(\mathbf{y}, \bm{\lambda}| \mathbf{x}).
\end{align}
We ignore the value $p(\hat{\bm{\lambda}})$ as we don't actually take a prior over the hyperparameters, this can be thought of assuming the same density over all hyperparameter values \cite{rasmussen2003gaussian}.
We further ignore $\mathbf{A}$; this is based on the expectation that in the large sample limit, the posterior of the hyperparameters will concentrate around a single point.
We can thus safely ignore $\mathbf{A}$, and simply approximate $\log P(\mathbf{y}| \mathbf{x}) \approx \log P(\mathbf{y}| \mathbf{x}, \hat{\bm{\lambda}})$.
We find that this works well in practice.

It is possible to `overfit' with this approximation \cite{ober2021promises}.
This is not a major issue in our chosen model, as the number of hyperparameters is very low compared to the number of data samples.

As we lower bound $P(\mathbf{y}| \mathbf{x}, \bm{\lambda})$ using $\mathcal{L}_{\mathbf{y}|\mathbf{x}}$ in  \cref{eq:lower_bound_conditional}, we can carry out the procedure described above by considering  $\mathcal{L}_{\mathbf{y}|\mathbf{x}}$ instead.
Thus, our approximation of the integral over the hyperparameters will involve finding the values of the hyperparameters $\hat{\bm{\lambda}}$ that maximise $\mathcal{L}_{\mathbf{y}|\mathbf{x}}$ and using $\mathcal{L}_{\mathbf{y}|\mathbf{x}}(q, \hat{\bm{\lambda}})$.

\section{Experiment Details}
We outline the experimental details of our method. As we implement SLOPPY, we also outline the details of the settings we used.
The results for the rest of the baselines were taken from \cite{guyon2019evaluation} and \cite{duong2021bivariate}.

\subsection{Dataset details}
\label{sec:dataset_details}

\textbf{CE-Cha}: A mixture of synthetic and real world data. Taken from the cause-effect pairs challenge \cite{guyon2019evaluation}.

\textbf{CE-Multi} \cite{goudet2018learning}: Synthetic data with effects generated with varying noise relationships. The noise relationships are pre-additive $(f(X + E))$, post-additive $(f(X) + E)$, pre-multiplicative $(f(X \times E))$, or post-multiplicative $(f(X) \times E)$. The function is linear or polynomial.

\textbf{CE-Net} \cite{goudet2018learning}: Synthetic data with randomly initialised neural networks for functions and random exponential family distributions chosen for the cause.

\textbf{CE-Gauss} \cite{mooij2016distinguishing}: Synthetic data generated with random noise distributions  $ E_1, E_2$ defined in \cite{mooij2016distinguishing}.
The cause and effect are generated according to $X = f_x(E_1)$ and $Y = f_y(X, E_2)$, where $f_x, f_y$ are sampled from Gaussian processes. 

\textbf{CE-Tueb} \cite{mooij2016distinguishing}: Contains 105 pairs of real cause effect pairs taken from the UCI dataset. We use the version dating August 22, 2016.
We also remove high dimensional datasets leaving 99 datasets in total.

\subsection{GPLVM details}
\label{sec:gplvm_details}

We use the GPLVM-closed form for all datasets except for CE-Tueb where we GPLVM-stochastic due to the high number of variables in some of the datasets.

For GPLVM-closed form, we use the sum of an RBF and linear kernels.
$q(w_n)$ has an analytical expectation for these kernels, as discussed in  \cref{sec:inducing_point_gplvm}.
As detailed in  \cref{sec:inducing_point_gplvm}, we find the optimal form of $q(\bm{u})$ following the procedure of \cite{titsias2010bayesian}.
We use 200 inducing points for all experiments.
The model was first trained using Adam with a learning rate of 0.1. 
After 20,000 epochs, the model was trained using BFGS.
We found that this greatly helped the numerical instability of BFGS, but found better ELBO (variational approximation to the marginal likelihood) values than simply using Adam.

For GPLVM-stochastic, as expectations are calculated by sampling, it was possible to use a larger number of kernels.
We used a sum of RBF, Linear, Matern32 and Rational Quadratic kernels.
10 samples were used to calculate the expectations.
The model was trained with Adam with a learning rate of 0.05.
The model stopped training if the value of the ELBO plateaued, else it ran for a maximum of 100,000 epochs.
In our experiments, we only use GPLVM-stochastic for CE-Tueb  as it had a few datasets that had a large number of samples.

The approximate posteriors for both the models $q (w_n)$ are initialised with low variance and the mean equal to $0.01$ times the output.
This reduced the instability during optimisation.

As GPLVMs are known to suffer from local optima issues, we use 20 random restarts of hyperparameter initialisations, and choose the highest estimate of the approximate marginal likelihood as the final score.
For the various hyperparameters, the sampling procedures were:
\begin{enumerate}
    \item The kernel variances were always set to 1. 
    \item The likelihood variances were sampled by first sampling $\kappa \sim \text{Uniform}(10, 100)$, and then $\sigma^2_{\text{Likelihood}} = 1/\kappa^2$.
    \item The kernel lengthscales were sampled by first sampling $\psi \sim \text{Uniform}(1,100)$, then set $\lambda_{\text{Lengthscale}} = 1/\psi$.
\end{enumerate}

\subsection{SLOPPY details}
\label{sec:sloppy_details}

For benchmarking the SLOPPY method, we use the author's code  \cite{marx2019identifiability}. We use  the spline estimator as it performs better on all the dataset. 
For this estimator, we select the best performing regularisation metric between the AIC and BIC.

\section{Additional Experiments}
\label{sec:additional_exp}

We carry out some additional experiments that give us insight into our method. 
In \cref{sec:anm_data_results}, we show that the GPLVM performs well on ANM data, despite being more fleixble than an ANM.
In \cref{sec:only_conditional_results}, we show that importance of modelling the joint instead of just the conditional or marginal densities.

\subsection{ANM Data}
\label{sec:anm_data_results}

\begin{table}[h!]
\centering
\caption{ROC AUC scores for identifying causal direction of datastes generated by an ANM (higher is better).}
\begin{tabular}{ll}
 \toprule
 Methods &  ANM\\
\midrule
   Gaussian Process & 100.0 \\
   GPLVM & 100.0 \\
 \bottomrule
\end{tabular}
\label{tab:anm_results}
\end{table} 

ANM is an example of a strictly identifiable model. Here we show
that our added flexibility does not result in a loss in performance when compared to identifiable models.

The GPLVM model contains the hyperparameters $\boldsymbol{\lambda}$ of the GP priors, which makes the GPLVM  a hierarchical model. 
Depending on which $\boldsymbol\lambda$ is inferred, the GPLVM can learn to behave in different ways. 
For example, for datasets that follow ANM assumptions, the effect of the latent variable $w_i$ can be ignored, making the model behave as an ANM. 
In this section, we show that the added flexibility of the GPLVM model (over ANM) does not lead to a reduction in performance when tested on data from an ANM.
 
We use datasets generated from an ANM (taken from \citet{tagasovska2020distinguishing}). 
A straightforward Gaussian process (GP) model satisfies the conditions of an ANM which have been shown to identify causal direction using the likelihood only \cite{zhang2015estimation}. 
\Cref{tab:anm_results} shows that the marginal likelihood also perfectly identifies causal direction.
Furthermore, even though a GPLVM is a more flexible model than a GP, the added flexibility does not suffer from a loss of performance.

\subsection{Only modelling the conditional or marginal}
\label{sec:only_conditional_results}
Methods such that ANM, PNL, SLOPPY, RECI only model the conditional densities to find the causal direction. 
In fact,  methods such as SLOPPY base their theory on modelling the joint, but make the assumption that the cause is always Gaussian distributed, and hence only consider the conditional.
The Kolmogorov complexity formalisation of the ICM principle \cite{janzing2010causal, peters2016causal} also considers the whole joint.

We show that modelling the joint is crucial to our approach, and that we suffer a degradation in performance when only considering one component - the marginal or conditional.
In  \cref{tab:appendix_conditional}, we show that results of the same model, but making the decision on the predicted causal model with the joint, the conditional, or with the marginal.
The results corroborate with our theory.

\begin{table*}[h!]
\centering
\caption{Results of making the decision on the predicted causal model with the full joint, or just with the marginal or conditional densities. In accordance with our theory, modelling the joint is important. The numbers are ROCAUC (higher is better).}
\begin{tabular}{llllll}
 \toprule
 Methods &   CE-Cha & CE-Multi & CE-Net & CE-Gauss & CE-Tueb\\
\midrule
   GPLVM - Joint & 81.9 & 97.7 & 98.9 & 89.3 & 78.3 \\
   GPLVM - Conditional & 61.5 & 89.7 & 70.3 & 21.7 & 36.2 \\
   GPLVM - Marginal & 44.5 & 23.3 & 42.6 & 83.1 & 75.7 \\
 \bottomrule
\end{tabular}
\label{tab:appendix_conditional}
\end{table*}

\end{document}